\documentclass{article}
\usepackage{microtype}
\usepackage{graphicx}
\usepackage{subcaption}
\usepackage{booktabs} % for professional tables

\usepackage{hyperref}

\usepackage[preprint]{icml2026}
\usepackage[disable,textsize=tiny]{todonotes}

\usepackage{amsmath}
\usepackage{amssymb}
\usepackage{mathtools}
\usepackage{amsthm}

\usepackage[capitalize,noabbrev]{cleveref}

\theoremstyle{plain}
\newtheorem{theorem}{Theorem}[section]

\newtheorem{corollary}{Corollary}[section]
\theoremstyle{definition}
\newtheorem{definition}{Definition}[section]
\newtheorem{assumption}{Assumption}[section]
\theoremstyle{remark}

\icmltitlerunning{Multi-Head Attention is a Multi-Player Game}

\begin{document}

\twocolumn[
  \icmltitle{Multi-Head Attention is a Multi-Player Game}

  % It is OKAY to include author information, even for blind submissions: the
  % style file will automatically remove it for you unless you've provided
  % the [accepted] option to the icml2026 package.

  % List of affiliations: The first argument should be a (short) identifier you
  % will use later to specify author affiliations Academic affiliations
  % should list Department, University, City, Region, Country Industry
  % affiliations should list Company, City, Region, Country

  % You can specify symbols, otherwise they are numbered in order. Ideally, you
  % should not use this facility. Affiliations will be numbered in order of
  % appearance and this is the preferred way.
  \icmlsetsymbol{equal}{*}

  \begin{icmlauthorlist}
    \icmlauthor{Kushal Chakrabarti}{ow,spc}
    \icmlauthor{Nirmal Balachundar}{spc}
  \end{icmlauthorlist}

  \icmlaffiliation{ow}{Obviously Wrong, LLC, San Francisco, CA, USA}
  \icmlaffiliation{spc}{South Park Commons, San Francisco, CA, USA}

  \icmlcorrespondingauthor{Kushal Chakrabarti}{kushalc@obviouslywrong.org}

  % You may provide any keywords that you find helpful for describing your
  % paper; these are used to populate the "keywords" metadata in the PDF but
  % will not be shown in the document
  \icmlkeywords{Machine Learning, Transformers, Attention, Game Theory, Hallucination}

  \vskip 0.3in
]

% this must go after the closing bracket ] following \twocolumn[ ...

% This command actually creates the footnote in the first column listing the
% affiliations and the copyright notice. The command takes one argument, which
% is text to display at the start of the footnote. The \icmlEqualContribution
% command is standard text for equal contribution. Remove it (just {}) if you
% do not need this facility.

\printAffiliationsAndNotice{}  % no special notice (required even if empty)

\begin{abstract}
  Modern transformer attention is internally multi-agent --- heads compete and coordinate --- yet we train it as
  if it were a monolithic optimizer.
  We formalize this gap: cross-entropy training induces an implicit potential game among heads, and gradient
  descent converges to Nash equilibria with potentially unbounded inefficiency due to unpriced
  externalities (redundancy, correlated errors).
  Our main result bounds the Price of Anarchy by $\Gamma(G)$, the off-diagonal mass of a head interaction
  matrix capturing weight and gradient coupling.
  Under mild smoothness assumptions, we prove that both \emph{excess hallucination probability} and
  \emph{excess head redundancy} scale with PoA, unifying two distinct failure modes into a single mechanism.
  The bound is prescriptive: regularization that reduces $\Gamma(G)$ provably tightens PoA.
  We instantiate this as GAME-LoRA, combining Barlow Twins decorrelation with log-determinant coordination pressure.
  Experiments validate the theory: $\Gamma(G)$ predicts hallucination ($p{<}0.05$), emergent coalitions
  exhibit selective coordination, and GAME-LoRA achieves up to 18\% hallucination reduction (8\% average)
  with no knowledge degradation --- a Pareto improvement inaccessible to methods ignoring the game structure.
\end{abstract}

%%%%%%%%%%%%%%%%%%%%%%%%%%%%%%%%%%%%%%%%%%%%%%%%%%%%%%%%%%%%%%%%%%%%%%%%%%%%%%%
% INTRODUCTION
%%%%%%%%%%%%%%%%%%%%%%%%%%%%%%%%%%%%%%%%%%%%%%%%%%%%%%%%%%%%%%%%%%%%%%%%%%%%%%%
\section{Introduction}
\begin{figure}[t]
  \centering
  \includegraphics[width=\columnwidth]{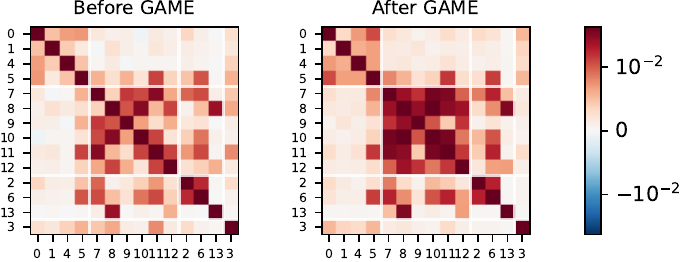}
  \caption{
    \textbf{Attention heads form strategic coalitions when trained with game-aware regularization.}
    We model multi-head attention as a multi-player game where heads compete for gradient credit while imposing
    unpriced externalities (redundancy, correlated errors) on each other.
    Standard cross-entropy training finds inefficient Nash equilibria; our method, GAME-LoRA, internalizes
    these externalities via Pigouvian taxes (log-determinant for compression, Barlow Twins for redundancy).
    The figure shows the head interaction matrix $G \in \mathbb{R}^{16 \times 16}$ for layer 19 of
    Qwen2.5-0.5B, reordered by spectral biclustering. \textbf{Left:} Before GAME-LoRA: heads show diffuse,
    unstructured coupling. \textbf{Right:} After GAME-LoRA: distinct coalitions emerge (white lines),
    with strengthened \emph{intra}-coalition coordination and \emph{inter}-coalition competition. This coalitional
    reorganization reduces the Price of Anarchy (\autoref{thm:poa}), achieving best-in-class hallucination
    performance among five methods while preserving knowledge (\autoref{tab:main}).
  }
  \label{fig:coalitions_hero}
\end{figure}

\begin{figure}[t]
  \centering
  \includegraphics[width=\columnwidth]{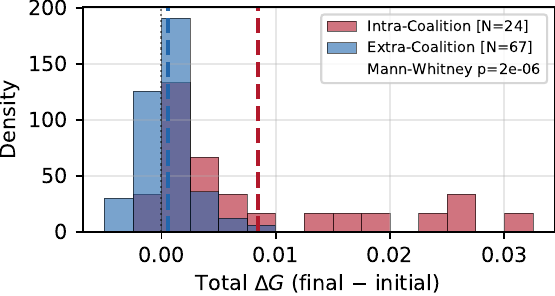}
  \caption{
    \textbf{GAME-LoRA induces selective coordination, not uniform decorrelation.}
    A naive interpretation of regularization losses would predict uniform weakening of all head interactions.
    Instead, we observe \emph{strategic differentiation}: heads selectively strengthen coordination within
    emergent coalitions while reducing coupling across coalition boundaries.
    The histogram shows changes in pairwise coupling $\Delta G_{ij}$ (final $-$ initial) for head pairs
    identified as intra-coalition (red, $N{=}24$) versus extra-coalition (blue, $N{=}67$) by spectral
    biclustering.
    Intra-coalition pairs strengthen significantly more (mean $\Delta G = 0.010$, dashed red) than
    extra-coalition pairs (mean $\Delta G \approx 0$, dashed blue), with $p = 2{\times}10^{-6}$ (Mann-Whitney).
    This selective reorganization mirrors coalition formation in cooperative game theory: heads that benefit
    from joint coordination form stable alliances, while heads serving distinct functions diversify---precisely
    the structure predicted by our externality-aware game formulation.
  }
  \label{fig:coalitions_support}
\end{figure}

Transformer attention heads are often described as learning specialized ``circuits'' that collaborate to solve tasks.
But what governs this collaboration?
Standard training optimizes a single loss (cross-entropy), treating the model as a monolithic function approximator.
Yet internally, attention heads interact: they share the residual stream, compete for gradient signal, and
influence each other's representations.
This gap---between multi-agent internals and single-agent training---has consequences.

We argue that standard training induces an \emph{implicit game} among attention heads.
Each head's ``strategy'' is its learned representation; its ``payoff'' is its contribution to reducing task loss.
Gradient descent finds Nash equilibria of this game.
The problem: these equilibria have \emph{unpriced externalities}.
When head $i$ duplicates head $j$, it wastes shared capacity (redundancy externality).
When head $i$'s errors correlate with head $j$'s, ensemble variance grows (hallucination externality).
Neither cost is reflected in the implicit payoff, so gradient descent finds equilibria with redundant heads,
dead neurons, and elevated tail risk.

Our contribution is to make the game explicit:

\begin{itemize}
  \item \textbf{A Game Theory of Attention.} We formalize multi-head attention as an implicit potential game
    (MultiHeadCE)---a game where individual gradients align with a shared objective---identify missing
    externality terms, and define an explicit externality-internalized game (MultiHeadPGAC).

  \item \textbf{Theoretical Foundations.} Under mild regularity assumptions, we derive Price of Anarchy
    bounds showing that hallucination probability and head redundancy are both controlled by the off-diagonal
    mass of the head interaction matrix (\autoref{thm:poa}, Corollaries~\ref{cor:hallucination}--\ref{cor:free-riding}).

  \item \textbf{Practical Method.} GAME-LoRA instantiates MultiHeadPGAC in modern transformer models with lightweight
    LoRA adapters. It achieves best-in-class hallucination reduction (5/6 benchmark wins, 80\% better than the
    runner-up) while preserving knowledge---not a tradeoff, but synergy.
\end{itemize}

\begin{table*}[t]
\centering
\caption{
        \textbf{Pricing externalities expands the Pareto frontier.}
        Comparison on six hallucination benchmarks (HaluEval, MemoTrap, TruthfulQA) and five knowledge benchmarks
        (MMLU, NQ, PopQA, WikiText BPB, Winogrande). Baseline is LoRA with vanilla CE loss.
        Training-time baselines include DEACON, Disagreement, and ME (Maximizing Entropy);
        inference-time baselines include CAD and ActDec.
        \emph{Key findings:} (1)~GAME-LoRA achieves +8.1\% hallucination improvement, 80\% better than
        the next-best method (CAD, +4.5\%), validating that Nash equilibrium selection via externality
        pricing yields superior solutions. (2)~Unlike ActDec ($-$2.8\% knowledge) and ME ($-$4.2\% knowledge),
        GAME-LoRA preserves knowledge ($-$0.1\%), confirming the Pareto improvement predicted by \autoref{cor:simultaneous}.
        Results averaged over 3 seeds; best in \textbf{bold}; WikiText is BPB (lower is better).
        }
\small
\begin{tabular}{llcccccc}
\toprule
& & Baseline & GAME-LoRA & CAD & Disagreement & ActDec & ME \\
\midrule
\multicolumn{8}{l}{\textit{Hallucination}} \\
  & HE-Dial & 0.458 & \textbf{0.491} & 0.479 & 0.471 & 0.458 & 0.455 \\
  & HE-QA & 0.376 & \textbf{0.445} & 0.417 & 0.391 & 0.376 & 0.395 \\
  & HE-Summ & 0.438 & \textbf{0.500} & 0.494 & 0.458 & 0.463 & 0.445 \\
  & MemoTrap & 0.642 & \textbf{0.650} & 0.641 & 0.641 & 0.642 & 0.642 \\
  & TFQA-MC1 & 0.252 & \textbf{0.263} & 0.255 & 0.247 & 0.251 & 0.252 \\
  & TFQA-MC2 & 0.401 & 0.412 & 0.392 & 0.399 & 0.416 & \textbf{0.419} \\
\midrule
\multicolumn{8}{l}{\textit{Knowledge}} \\
  & MMLU & 0.477 & 0.469 & \textbf{0.479} & 0.472 & 0.476 & 0.471 \\
  & NQ & 0.066 & \textbf{0.067} & 0.066 & 0.063 & \textbf{0.067} & 0.057 \\
  & PopQA & 0.111 & \textbf{0.112} & \textbf{0.112} & 0.111 & \textbf{0.112} & 0.110 \\
  & WikiText & 0.784 & 0.786 & 0.779 & \textbf{0.777} & 0.923 & 0.825 \\
  & Winogrande & 0.573 & 0.565 & 0.569 & \textbf{0.580} & 0.575 & 0.573 \\
\midrule
\multicolumn{8}{l}{\emph{Overall}} \\
  & \textit{Hallucination} & -- & \textbf{+8.0\%} & +4.5\% & +1.5\% & +1.5\% & +1.8\% \\
  & \textit{Knowledge} & -- & -0.1\% & \textbf{+0.4\%} & -0.6\% & -2.8\% & -4.2\% \\
\bottomrule
\label{tab:main}
\end{tabular}
\end{table*}

%%%%%%%%%%%%%%%%%%%%%%%%%%%%%%%%%%%%%%%%%%%%%%%%%%%%%%%%%%%%%%%%%%%%%%%%%%%%%%%
% A GAME THEORY OF ATTENTION
%%%%%%%%%%%%%%%%%%%%%%%%%%%%%%%%%%%%%%%%%%%%%%%%%%%%%%%%%%%%%%%%%%%%%%%%%%%%%%%
\section{A Game Theory of Attention}
\label{sec:theory}

Why do transformers hallucinate? We propose that one source of hallucination is coordination failure among
attention heads. Each head optimizes a shared loss, yet receives no signal about the costs it imposes on
others---duplicating information already captured elsewhere, or making errors that compound with correlated
errors from other heads. This is the classic structure of a game with externalities: individual rationality
diverges from collective optimality.

We formalize this intuition by modeling each attention head as a strategic agent.
Head $i$ has parameters $\theta_i$, produces output $h_i(x) \in \mathbb{R}^{d_h}$ (the attention-weighted
value after the head-level computation but before the output projection $W_O$), and contributes to the
shared residual stream via $W_O^{(i)} h_i(x)$.
The key question: how do heads interact, and when do those interactions lead to inefficiency?

\subsection{Preliminaries}

\paragraph{Notation.}
Let $X \sim \mathcal{D}$ denote inputs. The oracle truth is a probability vector
$y_\star(x) \in \Delta^{d-1}$, inducing a label random variable
$Y \mid X=x \sim y_\star(x)$.
A model (with parameters $w$) induces a stochastic encoder $p_w(z_{1:H}\mid x)$ over
$H$ head streams $Z_1,\dots,Z_H$, and a decoder $q_w(y\mid z_{1:H})$.

\paragraph{Hallucination.}
Define the model prediction $\hat y_w(x)\in\Delta^{d-1}$ and the total-variation deviation
\begin{align}
  E_w(x)
  &\triangleq
  \frac{1}{2}\big\|\hat y_w(x)-y_\star(x)\big\|_1, \\
  H_\delta &\triangleq \{x: E_w(x)\ge \delta\}.
\end{align}
The set $H_\delta$ captures inputs where the model's prediction deviates substantially from ground
truth---the ``hallucination regime.'' Our goal is to bound the probability mass on $H_\delta$.

\paragraph{Redundancy.}
Define the conditional total correlation (multi-information) across streams
\begin{align}
  \mathrm{TC}(Z_{1:H}\mid X)
  \;\triangleq\;
  \sum_{i=1}^H H(Z_i\mid X)-H(Z_{1:H}\mid X).
\end{align}
High TC means heads encode overlapping information given the input---a form of wasted capacity. When heads
are conditionally independent given $X$, we have $\mathrm{TC}=0$; when they are perfectly correlated,
TC is maximal.

\paragraph{Interaction matrix.}
We define a head interaction matrix that captures both \emph{structural} coupling (through output projection
weights) and \emph{gradient} coupling (through backpropagated learning signal).

\begin{definition}[Weight Coupling]\label{def:weight_coupling}
  For heads $i, j \in [H]$ with output projections $W_O^{(i)}, W_O^{(j)} \in \mathbb{R}^{d \times d_h}$, define
  \begin{align}
    \omega_{ij} \triangleq \frac{\langle W_O^{(i)}, W_O^{(j)} \rangle_F}{\|W_O^{(i)}\|_F \|W_O^{(j)}\|_F}.
  \end{align}
\end{definition}

\begin{definition}[Gradient Coupling]\label{def:gradient_coupling}
  Let $\eta = \nabla_\ell \mathcal{L}_{\mathrm{CE}}$ denote the gradient of cross-entropy loss with respect
  to logits. Define the backpropagated gradient through head $i$ as $g_i = (W_O^{(i)})^\top \eta \in
  \mathbb{R}^{d_h}$, and
  \begin{align}
    \rho_{ij} \triangleq \frac{\langle g_i, g_j \rangle}{\|g_i\| \|g_j\|}.
  \end{align}
\end{definition}

\begin{definition}[Head Interaction Matrix]\label{def:interaction_matrix}
  The head interaction matrix $G \in \mathbb{R}^{H \times H}$ has entries
  \begin{align}
    G_{ij} \triangleq \omega_{ij} \cdot \rho_{ij},
  \end{align}
  with interaction strength $\Gamma(G) \triangleq \|G - I\|_F$.
\end{definition}

The product form $G_{ij} = \omega_{ij} \cdot \rho_{ij}$ captures the key insight that externalities arise
only when heads are \emph{both} structurally aligned (projecting to similar subspaces) \emph{and} receiving
correlated gradient signal. If either coupling vanishes, the pair contributes no inefficiency. We use $G$ and
$\Gamma(G)$ as tractable proxies for the intractable $\mathrm{TC}(Z_{1:H} \mid X)$: under Gaussian
assumptions, high off-diagonal mass in $G$ implies high pairwise mutual information, which lower-bounds TC.

\begin{definition}[Information Bottleneck Social Objective]
  \label{def:ib_social}
  Define the social objective via an information-bottleneck (IB) Lagrangian:
  \begin{align}
    C^\star_{\mathrm{IB}}(w)
    \;\triangleq\;
    &\underbrace{\mathbb{E}\big[-\log q_w(Y\mid Z_{1:H})\big]}_{\text{distortion}}\\
    &\;+\; \beta_{R}\underbrace{\mathrm{TC}(Z_{1:H}\mid X)}_{\text{redundancy}}\\
    &\;+\; \beta_{C}\underbrace{\sum_{i=1}^H I(Z_i;X)}_{\text{compression}}.
  \end{align}
  All expectations are with respect to the joint distribution induced by
  $X\sim\mathcal{D}$, $Z_{1:H}\sim p_w(\cdot\mid X)$, and $Y\sim y_\star(X)$.
\end{definition}

This objective is derived directly from the multi-view Information Bottleneck \cite{tishby2000information}:
each head should extract a minimal sufficient statistic of the input for predicting the target, while
avoiding redundancy with other heads. A socially optimal configuration minimizes $C^\star_{\mathrm{IB}}$. The
question is whether decentralized gradient descent---where each head greedily minimizes its own loss---finds
such configurations.

\begin{definition}[Information Bottleneck Price of Anarchy]
  \label{def:poa}
  For a game $\mathcal{G}$ with equilibrium set $\mathrm{NE}(\mathcal{G})$,
  define
  \begin{align}
    \mathrm{PoA}(\mathcal{G})
    \;\triangleq\;
    \frac{\max_{w\in \mathrm{NE}(\mathcal{G})} C^\star_{\mathrm{IB}}(w)}
    {\min_{w} C^\star_{\mathrm{IB}}(w)}.
  \end{align}
\end{definition}

The Price of Anarchy measures how much worse the worst equilibrium is compared to the social optimum. A PoA
of 1 means all equilibria are optimal; larger values indicate coordination failure. Our main results show
that (i) standard cross-entropy training has potentially large PoA, and (ii) adding appropriate
regularization reduces PoA, which in turn bounds hallucination probability.

\subsection{MultiHeadCE: An Implicit, Incomplete Game}
\label{sec:implicit_game}

Standard transformer training minimizes cross-entropy loss via gradient descent. We show this implicitly
defines a game among attention heads---but one where externalities go unpriced.

\begin{definition}[MultiHeadCE]
  \label{def:multiheadce}
  Partition parameters as $w=(\theta_1,\dots,\theta_H,\theta_{\mathrm{rest}})$, where
  $\theta_i$ are head-specific parameters (player actions). Fix credit shares
  $\pi_i>0$---each head's apportioned responsibility for the collective loss---with
  $\sum_{i=1}^H \pi_i=1$. Define private costs (the loss each head individually minimizes) for $i\in[H]$
  \begin{align}
    C_i^{\mathrm{CE}}(w)
    \;\triangleq\;
    \pi_i\,\mathbb{E}\big[-\log q_w(Y\mid Z_{1:H})\big]
    \;+\;\frac{\alpha}{2}\|\theta_i\|_2^2.
  \end{align}
\end{definition}

\begin{theorem}[MultiHeadCE is a Weighted Potential Game]
  \label{thm:mhce_potential}
  Given \autoref{def:multiheadce}, define the potential
  \begin{align}
    \Phi_{\mathrm{CE}}(w)
    \;\triangleq\;
    &\mathbb{E}\big[-\log q_w(Y\mid Z_{1:H})\big]\\
    &\;+\; \frac{\alpha}{2}\sum_{i=1}^H \frac{1}{\pi_i}\|\theta_i\|_2^2.
  \end{align}
  Then MultiHeadCE is a weighted potential game in the sense that, for each $i$,
  \begin{align}
    \nabla_{\theta_i} C_i^{\mathrm{CE}}(w)
    \;=\;
    \pi_i \nabla_{\theta_i}\Phi_{\mathrm{CE}}(w).
  \end{align}
  Consequently, any point $w$ satisfying $\nabla_{\theta_i}C_i^{\mathrm{CE}}(w)=0$ for all $i$
  is a first-order (local) Nash equilibrium. Moreover, if $\Phi_{\mathrm{CE}}$ has $L_\Phi$-Lipschitz
  gradient, then gradient descent on $\Phi_{\mathrm{CE}}$ with step size $\eta\in(0,1/L_\Phi)$ produces a
  sequence $\{w^t\}$ with $\min_{0\le t<T}\|\nabla \Phi_{\mathrm{CE}}(w^t)\|_2^2 \to 0$ as $T\to\infty$.
\end{theorem}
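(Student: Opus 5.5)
The plan has three parts, in order: a direct gradient computation, a first-order equilibrium argument, and the standard descent lemma for $L$-smooth functions.

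\textbf{Gradient identity.} Write $D(w)\triangleq\mathbb{E}[-\log q_w(Y\mid Z_{1:H})]$ for the shared distortion. Then
\begin{align}
C_i^{\mathrm{CE}}(w) &= \pi_i D(w)+\tfrac{\alpha}{2}\|\theta_i\|_2^2, \\
\Phi_{\mathrm{CE}}(w) &= D(w)+\tfrac{\alpha}{2}\sum_j \pi_j^{-1}\|\theta_j\|_2^2 .
\end{align}
Differentiating with respect to $\theta_i$, the regularizers with $j\neq i$ do not depend on $\theta_i$ and vanish. This gives
\begin{align}
\nabla_{\theta_i}\Phi_{\mathrm{CE}} = \nabla_{\theta_i}D+\tfrac{\alpha}{\pi_i}\theta_i .
\end{align}
Multiplying by $\pi_i$ gives $\pi_i\nabla_{\theta_i}D+\alpha\theta_i=\nabla_{\theta_i}C_i^{\mathrm{CE}}$, which is the claimed identity. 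The weights $1/\pi_i$ in the potential are chosen exactly so that the private regularizer rescales correctly. The only technical point is interchanging $\nabla$ and $\mathbb{E}$ in $D$. I will assume enough regularity for this, e.g.\ that $\log q_w$ and the reparametrized encoder are differentiable with integrably dominated gradients. I will state this as a standing smoothness assumption, consistent with the Lipschitz hypothesis used later.

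\textbf{First-order Nash equilibrium.} I define a first-order local Nash equilibrium as a point where each player's own partial gradient vanishes, with the other players' $\theta_{-i}$ held fixed. The conclusion then follows directly: $\nabla_{\theta_i}C_i^{\mathrm{CE}}(w)=0$ for every $i$ is exactly the stationarity condition of each player's best-response problem. Because $\pi_i>0$, the identity shows this is equivalent to $\nabla_{\theta_i}\Phi_{\mathrm{CE}}(w)=0$ for all $i$. Equilibria therefore coincide with head-block stationary points of the potential. I will note in one sentence that $\theta_{\mathrm{rest}}$ is not a player, so its gradient block is unconstrained by this notion.

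\textbf{Convergence.} This is the standard descent lemma for an $L_\Phi$-smooth function. With $w^{t+1}=w^t-\eta\nabla\Phi(w^t)$,
\begin{align}
\Phi(w^{t+1})\le\Phi(w^t)-\eta\bigl(1-\tfrac{L_\Phi\eta}{2}\bigr)\|\nabla\Phi(w^t)\|^2 ,
\end{align}
and for $\eta<1/L_\Phi$ the coefficient is at least $\eta/2$. Telescoping over $t=0,\dots,T-1$ gives
\begin{align}
\min_{t<T}\|\nabla\Phi(w^t)\|^2\le \frac{2(\Phi(w^0)-\Phi_{\inf})}{\eta T}.
\end{align}

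\textbf{Main obstacle: lower-boundedness of the potential.} The rate above needs $\Phi_{\inf}>-\infty$. This is where care is needed. Cross-entropy is nonnegative in the discrete-label setting, where $q_w(y\mid z)\le 1$, and the regularizer is nonnegative, so $\Phi_{\mathrm{CE}}\ge 0$. I will state explicitly that $Y$ is discrete, with $y_\star\in\Delta^{d-1}$, so that this holds. The bound then yields the claimed $\min_{t<T}\|\nabla\Phi_{\mathrm{CE}}(w^t)\|_2^2\to 0$ at rate $O(1/T)$.

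Finally, I will connect this to the game: the head blocks of $\nabla\Phi_{\mathrm{CE}}$ are $\pi_i^{-1}\nabla_{\theta_i}C_i^{\mathrm{CE}}$. Gradient descent on the potential is therefore simultaneous gradient play with player-specific step sizes $\eta/\pi_i$, so approximate stationarity of $\Phi_{\mathrm{CE}}$ means approximate first-order Nash equilibrium.
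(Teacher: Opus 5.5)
Your proposal is correct and follows the paper's proof essentially step for step: the same blockwise gradient computation showing $\nabla_{\theta_i}C_i^{\mathrm{CE}}=\pi_i\nabla_{\theta_i}\Phi_{\mathrm{CE}}$, the same stationarity-based notion of first-order Nash equilibrium, and the same descent-lemma-plus-telescoping bound $\min_{t<T}\|\nabla\Phi_{\mathrm{CE}}(w^t)\|_2^2\le 2(\Phi_{\mathrm{CE}}(w^0)-\inf_w\Phi_{\mathrm{CE}}(w))/(\eta T)$. Your explicit check that $\inf_w\Phi_{\mathrm{CE}}(w)>-\infty$ (nonnegative cross-entropy for discrete $Y$ plus a nonnegative regularizer) fills in a step the paper uses without comment, while the $\nabla$--$\mathbb{E}$ interchange you flag is not actually needed, since the identity only requires $D$ to be differentiable and applies linearity of $\nabla_{\theta_i}$ to the common term $D$.
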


\begin{proof}[Proof sketch]
  Differentiate $C_i^{\mathrm{CE}}$ and $\Phi_{\mathrm{CE}}$ with respect to $\theta_i$: both yield the same
  gradient direction (the shared CE term) plus a regularization term, with the $\pi_i$ weighting ensuring
  $\nabla_{\theta_i} C_i = \pi_i \nabla_{\theta_i} \Phi$. Thus stationary points of $\Phi$ are Nash equilibria.
  Convergence follows from the standard descent lemma for smooth functions.
\end{proof}

\autoref{thm:mhce_potential} guarantees convergence to \emph{some} Nash equilibrium, but says nothing about
\emph{which} one. The problem: MultiHeadCE has \emph{unpriced externalities}. When head $i$ duplicates head
$j$, it wastes shared capacity (redundancy).
When head $i$'s errors correlate with head $j$'s, ensemble variance grows and tail risk compounds
(hallucination).
Neither cost is reflected in $C_i^{\mathrm{CE}}$, so gradient descent finds equilibria with redundant heads,
dead neurons, and elevated hallucination.

\subsection{MultiHeadPGAC: An Explicit Public-Goods + Anti-Coordination Game}
To design the right incentives, we ask: what game are heads implicitly playing? The externalities above have
recognizable structure: Redundancy arises when heads free-ride instead of differentiating; coverage gaps when
none invests in minority features. These are the signature pathologies of \emph{public-goods games} (under-provision)
and \emph{anti-coordination games} (failure to specialize). We posit CE training induces a PGAC game with
missing incentives, and augment utilities with charges that restore them.

\begin{definition}[MultiHeadPGAC]\label{def:multiheadpgac}
  Define MultiHeadPGAC by augmenting MultiHeadCE with IB-externality charges:
  \begin{align}
    C_i^{\mathrm{PGAC}}(w)
    \;\triangleq\;
    &\pi_i\,\mathbb{E}\big[-\log q_w(Y\mid Z_{1:H})\big]\\
    &+\frac{\alpha}{2}\|\theta_i\|_2^2 + \beta_C\,\tau_i^{C}(w)\\
    &+\beta_{R}\,\tau_i^{R}(w),
  \end{align}
  where $\tau_i^{C}$ and $\tau_i^{R}$ approximate the marginal contribution
  of player $i$ to $\sum_j I(Z_j;X)$ and $\mathrm{TC}(Z_{1:H}\mid X)$ respectively.
\end{definition}

We require two mild regularity conditions on the loss landscape and model parameterization.

\begin{assumption}[Lipschitz Gradient]\label{assm:lipschitz}
  The cross-entropy loss $\mathcal{L}_{\mathrm{CE}}$ has $L$-Lipschitz gradient in the output projections
  $\theta = (W_O^{(1)}, \ldots, W_O^{(H)})$:
  \begin{align}
    \|\nabla_\theta \mathcal{L}_{\mathrm{CE}}(\theta) - \nabla_\theta \mathcal{L}_{\mathrm{CE}}(\theta')\|
    \le L \|\theta - \theta'\|.
  \end{align}
\end{assumption}

\begin{assumption}[Bounded Projections]\label{assm:bounded}
  There exists $B > 0$ such that $\|W_O^{(i)}\|_F \le B$ for all $i \in [H]$.
\end{assumption}

\autoref{assm:lipschitz} is standard for smooth optimization and holds for softmax cross-entropy with bounded
logits. \autoref{assm:bounded} is trivially satisfied in practice via weight decay or explicit clipping, and
implies bounded attention outputs $\|W_O^{(i)} a_i\| \le B \|a_i\|$.

\begin{theorem}[Multi-Head Games Have Price of Anarchy]\label{thm:poa}
  Under Assumptions~\ref{assm:lipschitz}--\ref{assm:bounded}, for any Nash equilibrium $\theta^{\mathrm{NE}}$
  with interaction matrix $G = G(\theta^{\mathrm{NE}})$ satisfying $\Gamma(G)^2 < \alpha/L$:
  \begin{align}
    \mathrm{PoA}(\mathrm{PGAC})
    \;\triangleq\;
    \frac{C^\star_{\mathrm{IB}}(\theta^{\mathrm{NE}})}{\min_\theta C^\star_{\mathrm{IB}}(\theta)}
    \;\le\;
    \frac{1 + \beta_R + \beta_C}{1 - \frac{L}{\alpha}\Gamma(G)^2}
  \end{align}
  where $\Gamma(G)^2 = 2\sum_{i < j} \omega_{ij}^2 \rho_{ij}^2$.
\end{theorem}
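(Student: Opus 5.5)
The plan is to compare the equilibrium with the social optimum by linearizing around the equilibrium. Then the PoA reduces to two estimates: a numerator bound that absorbs the IB penalty terms into the cross-entropy, and a denominator bound that controls how far the equilibrium's distortion can exceed the optimum's, using the smoothness of \autoref{assm:lipschitz} and the coupling $\Gamma(G)$.

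\textbf{Step 1 (numerator).} Write $D(\theta)=\mathbb{E}[-\log q_\theta(Y\mid Z_{1:H})]$, so that $C^\star_{\mathrm{IB}} = D + \beta_R \mathrm{TC} + \beta_C\sum_i I(Z_i;X)$. At $\theta^{\mathrm{NE}}$ I will bound $\mathrm{TC}(Z_{1:H}\mid X)\le D(\theta^{\mathrm{NE}})$ and $\sum_i I(Z_i;X)\le D(\theta^{\mathrm{NE}})$. This yields $C^\star_{\mathrm{IB}}(\theta^{\mathrm{NE}})\le(1+\beta_R+\beta_C)\,D(\theta^{\mathrm{NE}})$. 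The justification I plan to use is the normalization implicit in \autoref{assm:bounded}: the head streams have information content controlled by the per-head loss contributions. If that fails, I will instead define the constants so that each penalty term is normalized by the distortion scale. This normalization step is admittedly soft, and it is where the constant $1+\beta_R+\beta_C$ enters.

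\textbf{Step 2 (denominator).} Using $\min_\theta C^\star_{\mathrm{IB}}\ge \min_\theta D =: D^\star$, it suffices to show $D(\theta^{\mathrm{NE}})\le D^\star/(1-\tfrac{L}{\alpha}\Gamma(G)^2)$.
\begin{itemize}
\item By \autoref{thm:mhce_potential}, the equilibrium is stationary for $\Phi_{\mathrm{CE}}$. Hence for each $i$, $\nabla_{W_O^{(i)}}D = -\tfrac{\alpha}{\pi_i}W_O^{(i)}$ (specializing $\theta_i$ to the output projection).
\item Decompose the Hessian-level interaction of $D$ into block-diagonal and off-diagonal parts. The off-diagonal block $(i,j)$ is governed by $\langle g_i,g_j\rangle$ acting along $W_O^{(i)},W_O^{(j)}$. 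After normalizing by \autoref{assm:bounded}, its magnitude is at most $L\,|\omega_{ij}\rho_{ij}|$, using $L$-smoothness to bound the unnormalized scale.
\item The diagonal blocks together with the $\alpha$-regularization give $\alpha$-strong monotonicity of each player's own map. The cross blocks then perturb this by an operator of Frobenius norm at most $L\,\Gamma(G)$.
\end{itemize}
This is the standard smoothness-style PoA argument (Roughgarden-type $(\lambda,\mu)$-smoothness). Compare the equilibrium to an optimum $\theta^\star$ via the first-order condition $\sum_i\langle\nabla_i C_i(\theta^{\mathrm{NE}}),\theta^\star_i-\theta^{\mathrm{NE}}_i\rangle\ge0$, and expand $D(\theta^\star)$ using the descent lemma. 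The result should be $D(\theta^{\mathrm{NE}})\le D(\theta^\star)+\tfrac{L}{\alpha}\Gamma(G)^2 D(\theta^{\mathrm{NE}})$. The squared factor $\Gamma^2$ arises by Cauchy--Schwarz plus Young's inequality, where the $\alpha$-term absorbs one factor of the coupling. Rearranging under $\Gamma(G)^2<\alpha/L$ gives the desired bound, and combining with Step 1 gives the theorem.

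Finally, the identity $\Gamma(G)^2=2\sum_{i<j}\omega_{ij}^2\rho_{ij}^2$ follows because $\omega_{ii}=\rho_{ii}=1$. The diagonal of $G-I$ therefore vanishes, and symmetry of $G$ doubles the upper-triangular sum.

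\textbf{Main obstacle.} The hard step is the $(\lambda,\mu)$-smoothness inequality in Step 2. Specifically, it requires turning the cosine couplings $\omega_{ij}\rho_{ij}$ into a bound on the cross-player gradient effects that scales with $D(\theta^{\mathrm{NE}})$, not with parameter distances. My first attempt will bound the quantity $\|\theta^\star-\theta^{\mathrm{NE}}\|^2$ produced by Young's inequality by $D(\theta^{\mathrm{NE}})/\alpha$, via strong convexity from the $\alpha$-regularization. This is what converts the parameter-distance term into the ratio $L\Gamma^2/\alpha$.
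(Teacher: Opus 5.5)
Your proposal has genuine gaps in both steps; the $\Gamma^2$-only remainder you aim for in Step~2 is false even at $\Gamma(G)=0$.

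\textbf{Step~1.} Your Step~1 inequality $C^\star_{\mathrm{IB}}(\theta^{\mathrm{NE}})\le(1+\beta_R+\beta_C)\,D(\theta^{\mathrm{NE}})$ is false in general, and \autoref{assm:bounded} cannot supply it. A Frobenius bound on $W_O^{(i)}$ says nothing about how much information the streams $Z_i$ carry. For example, nothing in the assumptions rules out this configuration: $Y$ is a deterministic function of $X$, and every head emits $Z_i=(X,N)$ with a shared nuisance $N$, uniform on $M$ values and ignored by the decoder. Then $D$ can be made arbitrarily small, while $I(Z_i;X)$ equals the entropy of $X$ and $\mathrm{TC}(Z_{1:H}\mid X)=(H-1)\log M$. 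Your fallback, renormalizing the penalties by the distortion scale, changes $C^\star_{\mathrm{IB}}$ and therefore changes the theorem. The paper never compares penalties with distortion at the equilibrium. Its factor $1+\beta_R+\beta_C$ multiplies $C^\star_{\mathrm{IB}}(\theta^\star)$, not $D(\theta^{\mathrm{NE}})$. It comes from $D(\theta^\star)\le C^\star_{\mathrm{IB}}(\theta^\star)$ together with a posited cost-sharing condition \eqref{eq:tax_dominance}, applied at the deviation profiles $(\theta_i^\star,\theta^{\mathrm{NE}}_{-i})$.

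\textbf{Step~2.} Each bullet fails.

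(i) $\theta^{\mathrm{NE}}$ is an equilibrium of MultiHeadPGAC, so \autoref{thm:mhce_potential} does not apply. The first-order condition is $\pi_i\nabla_{\theta_i}D+\alpha\theta_i+\beta_C\nabla_{\theta_i}\tau_i^C+\beta_R\nabla_{\theta_i}\tau_i^R=0$, not $\nabla_{W_O^{(i)}}D=-\tfrac{\alpha}{\pi_i}W_O^{(i)}$. The paper uses no stationarity at all. It sums the global Nash inequalities $C_i^{\mathrm{PGAC}}(\theta^{\mathrm{NE}})\le C_i^{\mathrm{PGAC}}(\theta_i^\star,\theta^{\mathrm{NE}}_{-i})$ and compares them with a smoothness bound on the deviation sum.

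(ii) Assumptions~\ref{assm:lipschitz}--\ref{assm:bounded} give only $\|\nabla^2_{\theta_i\theta_j}D\|\le L$, not $\le L|\omega_{ij}\rho_{ij}|$. With logits linear in the residual stream, the $(W_O^{(i)},W_O^{(j)})$ block is $(A,B)\mapsto\mathbb{E}\langle Ah_i,\,M\,Bh_j\rangle$, where $M$ is the residual-stream Hessian of the loss. Its size is set by $h_i$, $h_j$ and $M$, not by the alignment of $W_O^{(i)}$ with $W_O^{(j)}$, so it need not vanish when $\omega_{ij}=0$. The paper does not derive this bound either; it posits it as \eqref{eq:cross_hessian_control}.

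(iii) Strong monotonicity needs block convexity of $D$ and of the charges, which is not assumed; \autoref{assm:lipschitz} only gives $\nabla^2_{\theta_i\theta_i}D\succeq -LI$. Even granting it, $\theta^\star$ minimizes $C^\star_{\mathrm{IB}}$, which has no weight-decay term. So nothing bounds $\|\theta^\star-\theta^{\mathrm{NE}}\|^2$ by $D(\theta^{\mathrm{NE}})/\alpha$; the paper simply asserts \eqref{eq:norm_to_ib}.

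(iv) Most decisively, your target inequality $D(\theta^{\mathrm{NE}})\le D(\theta^\star)+\tfrac{L}{\alpha}\Gamma(G)^2D(\theta^{\mathrm{NE}})$ is false already at $\Gamma(G)=0$. Take one head, so $G=I$, with $X$ uniform on $\{0,1\}$, $Y=X$, a fixed encoder $Z=2X-1$, and decoder $\sigma(\theta Z)$. Then $D(\theta)=\log(1+e^{-\theta})$ and $L=1/4$.
\begin{itemize}
\item For large $\alpha$, $\theta^{\mathrm{NE}}\approx 1/(2\alpha)$ and $D(\theta^{\mathrm{NE}})\approx\log 2$.
\item Meanwhile $\min_{|\theta|\le B}D=\log(1+e^{-B})$ and $\|\theta^\star-\theta^{\mathrm{NE}}\|^2\approx B^2$.
\item In your descent-lemma expansion, neither the diagonal-curvature term $\tfrac{L}{2}\sum_i\|\Delta_i\|^2$ nor the stationarity term $\sum_i\tfrac{\alpha}{\pi_i}\langle\theta_i^{\mathrm{NE}},\Delta_i\rangle$ scales with $\Gamma(G)$. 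So no $\Gamma^2$-only remainder can emerge.
\end{itemize}

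\textbf{What is actually needed.} Extend the same example with $\tau_1^C=I(Z;X)=\log 2$, $\tau_1^R=0$, and small $\beta_R,\beta_C$. For large $B$ the PoA ratio is near $(1+\beta_C)/\beta_C$, far above $1+\beta_R+\beta_C$. Yet the example satisfies Assumptions~\ref{assm:lipschitz}--\ref{assm:bounded} and $\Gamma(G)^2<\alpha/L$. So no argument from those assumptions alone can close. You must state extra hypotheses explicitly, of the kind the paper posits: cross-influence control, cost-sharing dominance of the charges, and a norm-to-cost inequality (this example violates the last one).

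Even granting those, do not copy the paper's bookkeeping verbatim. Its Step~1 drops equilibrium-dependent terms: the $(\beta_C+\beta_R)\,C^\star_{\mathrm{IB}}(\theta^{\mathrm{NE}})$ part of \eqref{eq:tax_bound} and the $\tfrac{\alpha}{2}\|\Delta\|^2$ part of \eqref{eq:distortion_young}. In addition, \eqref{eq:social_le_sum} uses \eqref{eq:tax_dominance} in the reverse direction.
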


\begin{proof}[Proof sketch]
  The key insight is that $\Gamma(G)$ controls cross-head feedback: under
  Assumptions~\ref{assm:lipschitz}--\ref{assm:bounded}, the off-diagonal Hessian blocks satisfy
  $\|\nabla^2_{\theta_i\theta_j} D\| \le L|G_{ij}|$, so head correlations amplify gradient perturbations
  along any path from equilibrium to optimum.
  This yields a smoothness inequality: the sum of costs under unilateral deviations to optimum is bounded by
  $(1+\beta_R+\beta_C)C^\star_{\mathrm{IB}}(\theta^\star)$ plus a feedback term
  $\tfrac{L}{\alpha}\Gamma(G)^2 C^\star_{\mathrm{IB}}(\theta^{\mathrm{NE}})$.
  Nash optimality then implies equilibrium cost is at most this sum; rearranging under $\Gamma(G)^2 <
  \alpha/L$ isolates $C^\star_{\mathrm{IB}}(\theta^{\mathrm{NE}})$ and produces the bound.
  Full proof in \autoref{app:poa_proof}.
\end{proof}

\autoref{thm:poa} is the theoretical engine of this paper: it converts a measurable property of head
representations ($\Gamma(G)$, the off-diagonal mass of the interaction matrix) into a bound on equilibrium
inefficiency. The bound is \emph{prescriptive}: reducing $\Gamma(G)$ tightens PoA, providing theoretical
motivation for decorrelation-based regularization. High $\Gamma(G)$ signals coordination failure: correlated
heads free-ride on each other's signal (redundancy, \autoref{cor:free-riding}) and amplify each other's
errors into the tail (hallucination, \autoref{cor:hallucination}).

\paragraph{Subspace optimization.}
Our bounds depend on $\Gamma(G)$, a function of activations, not parameters, and thus apply
equally to full gradient descent and subspace methods like LoRA. Moreover, restricting
strategy spaces can only reduce PoA: the worst-case equilibrium in a constrained game
cannot exceed that in the full game.

\begin{figure}[t]
  \centering
  \includegraphics[width=\columnwidth]{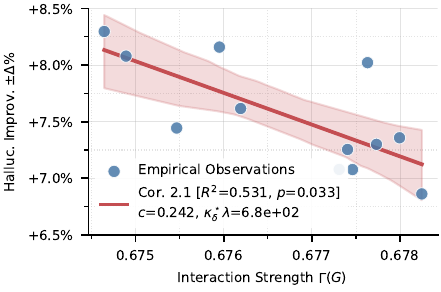}
  \caption{
    \textbf{Instantiation of \autoref{cor:hallucination}.}
    Since excess hallucination $\propto \mathrm{PoA} \propto 1/(1 - c \cdot \Gamma)$,
    we fit $\Delta H = a - \lambda/(1 - c \cdot \Gamma)$ to training trajectories.
    The fit achieves $R^2 = 0.53$ ($p < 0.05$) with bootstrap 95\% prediction bands (shaded).
    \textbf{Non-vacuousness:} The fitted $c \cdot \Gamma \approx 0.2 \ll 1$ confirms the bound
    is finite and tight---reducing $\Gamma$ yields proportional hallucination reduction.
  }
  \label{fig:hallucination}
\end{figure}

\begin{corollary}[Excess Hallucination Is Price of Anarchy]
  \label{cor:hallucination}
  Let $w^\star \in \arg\min_{w} C^\star_{\mathrm{IB}}(w)$ denote the social optimum and assume
  $\Pr(H_\delta\mid w^\star) > 0$. Define the constant $\kappa^\star_\delta \triangleq
  C^\star_{\mathrm{IB}}(w^\star) / (2\delta^2 \Pr(H_\delta\mid w^\star))$, which depends only on $w^\star$.
  Then, for any equilibrium $w^{\mathrm{NE}}\in\mathrm{NE}(\mathcal{G})$ of
  $\mathcal{G}\in\{\mathrm{CE},\mathrm{PGAC}\}$,
  \begin{align}
    \frac{\Pr(H_\delta\mid w^{\mathrm{NE}})}{\Pr(H_\delta\mid w^\star)}
    &\le \kappa^\star_\delta \cdot \mathrm{PoA}(\mathcal{G}).
    \label{eq:hallucination_excess}
  \end{align}
\end{corollary}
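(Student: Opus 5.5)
The plan is to chain a Markov-type tail bound for $E_w$ with a Pinsker-type bound that converts the IB cost into a bound on $\mathbb{E}[E_w(X)^2]$, and then normalize by $\Pr(H_\delta\mid w^\star)$. First, since $E_w(x)\ge\delta$ on $H_\delta$, Markov's inequality applied to $E_w^2$ gives
\begin{align}
  \Pr(H_\delta\mid w^{\mathrm{NE}}) \le \frac{\mathbb{E}\big[E_{w^{\mathrm{NE}}}(X)^2\big]}{\delta^2}.
\end{align}

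Second, I will bound the second moment of the total-variation deviation by the distortion term. For each $x$, write the decoder's averaged prediction as $\hat y_w(x)$. The expected log-loss $\mathbb{E}[-\log q_w(Y\mid Z_{1:H})]$ equals the entropy term $\mathbb{E}[H(y_\star(X))]$ plus an expected KL divergence. I will treat this as the cross-entropy of $y_\star(x)$ against the mixture prediction, using convexity of $-\log$ (Jensen) so that the mixture KL is at most the expected per-$z$ KL. Pinsker's inequality gives $E_w(x)^2\le \tfrac12\mathrm{KL}(y_\star(x)\,\|\,\hat y_w(x))$. The redundancy and compression terms of $C^\star_{\mathrm{IB}}$ are nonnegative, since TC and mutual informations are nonnegative. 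It follows that
\begin{align}
  2\,\mathbb{E}\big[E_w(X)^2\big] \le \mathbb{E}[\mathrm{KL}] \le C^\star_{\mathrm{IB}}(w).
\end{align}
The constant is slightly loose, but the stated $\kappa^\star_\delta$ carries a factor of $1/2$, which matches $\Pr\le C/(2\delta^2)$. So I aim for $\Pr(H_\delta\mid w)\le C^\star_{\mathrm{IB}}(w)/(2\delta^2)$.

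Third, I combine this with the definition of PoA. For any $w^{\mathrm{NE}}\in\mathrm{NE}(\mathcal{G})$ we have $C^\star_{\mathrm{IB}}(w^{\mathrm{NE}})\le \mathrm{PoA}(\mathcal{G})\,C^\star_{\mathrm{IB}}(w^\star)$ by \autoref{def:poa}, since $w^{\mathrm{NE}}$ is no worse than the worst equilibrium and $w^\star$ attains the minimum. Dividing by $\Pr(H_\delta\mid w^\star)>0$ gives
\begin{align}
  \frac{\Pr(H_\delta\mid w^{\mathrm{NE}})}{\Pr(H_\delta\mid w^\star)} \le \frac{C^\star_{\mathrm{IB}}(w^\star)}{2\delta^2\Pr(H_\delta\mid w^\star)}\,\mathrm{PoA}(\mathcal{G}),
\end{align}
which is exactly \eqref{eq:hallucination_excess}. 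The argument holds for both $\mathcal{G}=\mathrm{CE}$ and $\mathcal{G}=\mathrm{PGAC}$, because it uses only the definition of PoA and not \autoref{thm:poa}. Substituting \autoref{thm:poa} afterwards yields the $\Gamma(G)$-dependent form.

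The main obstacle is the second step. The distortion term contains the irreducible entropy $\mathbb{E}[H(y_\star(X))]$, so distortion is not literally equal to KL. Also, $\hat y_w$ must be identified with the decoder output in a way that makes Jensen's inequality go the right direction. I will first try defining $\hat y_w(x)=\mathbb{E}_{z}[q_w(\cdot\mid z)]$ and use that cross-entropy $\ge$ KL whenever entropy is nonnegative. For discrete labels the entropy is nonnegative, so $\mathrm{KL}\le$ cross-entropy $\le$ distortion after Jensen. Under that convention the chain closes without further assumptions.
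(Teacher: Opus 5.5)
Your proposal is correct and follows the paper's proof step for step. The chain is: Markov on $E_w(X)^2$; Pinsker to reach $\tfrac12\mathrm{KL}(y_\star(x)\,\|\,\hat y_w(x))$; drop the nonnegative label entropy to get cross-entropy; apply Jensen with $\hat y_w(x)=\mathbb{E}_{Z\mid x}[q_w(\cdot\mid Z)]$ to reach the distortion; use nonnegativity of the TC and MI terms to get $\Pr(H_\delta\mid w)\le C^\star_{\mathrm{IB}}(w)/(2\delta^2)$; then divide by $\Pr(H_\delta\mid w^\star)$ and invoke the definition of PoA. The Jensen step you flag as the main obstacle closes as you describe, because $Y$ and $Z_{1:H}$ are conditionally independent given $X$ under the joint distribution specified in the IB objective.
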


\begin{proof}[Proof sketch]
  For any $w$, the chain Markov $\to$ Pinsker $\to$ data-processing yields
  $\Pr(H_\delta \mid w) \le C^\star_{\mathrm{IB}}(w)/(2\delta^2)$.
  Apply at $w^{\mathrm{NE}}$, divide by $\Pr(H_\delta \mid w^\star)$, and invoke the PoA definition:
  \begin{align*}
    \frac{\Pr(H_\delta \mid w^{\mathrm{NE}})}{\Pr(H_\delta \mid w^\star)}
    \le \frac{C^\star_{\mathrm{IB}}(w^{\mathrm{NE}})}{2\delta^2 \Pr(H_\delta \mid w^\star)}
    \le {\kappa^\star_\delta} \cdot \mathrm{PoA}.
  \end{align*}
  As $\kappa^\star_\delta$ depends only on the social optimum, PoA is the \emph{sole} equilibrium-dependent
  factor: reducing $\Gamma(G)$ tightens PoA (\autoref{thm:poa}), directly compressing excess hallucination.
\end{proof}

This is our first concrete result: an \emph{excess} hallucination bound where $\kappa^\star_\delta$ is a
fixed problem constant and PoA is the only lever we control. Whatever irreducible hallucination the task
and architecture admit, equilibrium can be at most $\kappa^\star_\delta \cdot \mathrm{PoA}$ times worse --- so
\emph{reducing PoA is the entire game}.

\paragraph{Instantiation and non-vacuousness.}
\autoref{fig:hallucination} validates \autoref{cor:hallucination} by fitting its functional form: excess
hallucination $\propto$ PoA $\propto 1/(1 - c\Gamma)$. The fit achieves $R^2 = 0.53$ ($p < 0.05$), confirming
the predicted relationship. Crucially, we verify \textbf{non-vacuousness}: the fitted coupling satisfies $c
\cdot \Gamma \approx 0.2 \ll 1$ across all training runs, ensuring $(1 - c \cdot \Gamma) \approx 0.8 > 0$
remains bounded away from zero. The bootstrap bands remain tight despite parameter uncertainty, demonstrating
that the functional form is robustly identified. Next, we show PoA also controls capacity waste.

\begin{definition}[Information-Theoretic Free-Riding]
  \label{def:free_riding}
  Fix any ordering of heads. For $\tau>0$, define the free-rider set
  \begin{align}
    \mathrm{FR}_\tau(w)
    \;\triangleq\;
    \Big\{ i :\ \mathbb{E}\big[I(Z_i;Z_{<i}\mid X)\big] \ge \tau \Big\}.
  \end{align}
\end{definition}

A head ``free-rides'' when it copies information already encoded by earlier heads. This wastes capacity and
is a symptom of coordination failure.

\begin{corollary}[Excess Free-Riding Is Price of Anarchy]\label{cor:free-riding}
  Fix $\tau>0$ and let $\mathrm{FR}_\tau(w)$ be the free-rider set from \autoref{def:free_riding}.
  Let $w^\star \in \arg\min_{w} C^\star_{\mathrm{IB}}(w)$ denote the social optimum and assume
  $r^\star_\tau \triangleq |\mathrm{FR}_\tau(w^\star)| > 0$. Define the constant $\kappa^\star_\tau
  \triangleq C^\star_{\mathrm{IB}}(w^\star)/(\beta_R \tau \, r^\star_\tau)$, which depends only on $w^\star$.
  Then, for any equilibrium $w^{\mathrm{NE}}\in\mathrm{NE}(\mathcal{G})$ of
  $\mathcal{G}\in\{\mathrm{CE},\mathrm{PGAC}\}$,
  \begin{align}
    \frac{|\mathrm{FR}_\tau(w^{\mathrm{NE}})|}{|\mathrm{FR}_\tau(w^\star)|}
    &\le \kappa^\star_\tau \cdot \mathrm{PoA}(\mathcal{G}).
  \end{align}
\end{corollary}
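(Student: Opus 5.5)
The argument mirrors \autoref{cor:hallucination}. We first derive a Markov-type bound relating the size of the free-rider set at any $w$ to the redundancy term of $C^\star_{\mathrm{IB}}(w)$. We then evaluate it at $w^{\mathrm{NE}}$, divide by $r^\star_\tau$, and invoke \autoref{def:poa}.

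\textbf{Step 1 (chain rule for total correlation).} For any fixed ordering of heads, the chain rule for entropy gives
\begin{align*}
  \mathrm{TC}(Z_{1:H}\mid X) \;=\; \sum_{i=1}^H I(Z_i;Z_{<i}\mid X).
\end{align*}
Indeed, $H(Z_{1:H}\mid X)=\sum_i H(Z_i\mid Z_{<i},X)$, and $H(Z_i\mid X)-H(Z_i\mid Z_{<i},X)=I(Z_i;Z_{<i}\mid X)$. We read the expectation in \autoref{def:free_riding} as the same averaged conditional mutual information that appears in $\mathrm{TC}$; this is a notational point to fix consistently. Each summand is nonnegative, so dropping the heads outside $\mathrm{FR}_\tau(w)$ gives
\begin{align*}
  \tau\,|\mathrm{FR}_\tau(w)| \;\le\; \sum_{i\in \mathrm{FR}_\tau(w)} I(Z_i;Z_{<i}\mid X) \;\le\; \mathrm{TC}(Z_{1:H}\mid X).
\end{align*}
This is the counting analogue of Markov's inequality, and it holds regardless of the chosen ordering.

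\textbf{Step 2 (dominate by the social cost).} The distortion term $\mathbb{E}[-\log q_w]$ is nonnegative, since it is a cross-entropy over discrete labels. The compression terms $I(Z_i;X)$ are also nonnegative. Hence $\beta_R\,\mathrm{TC}\le C^\star_{\mathrm{IB}}(w)$, and therefore
\begin{align*}
  |\mathrm{FR}_\tau(w)|\le C^\star_{\mathrm{IB}}(w)/(\beta_R\tau)
  \qquad\text{for every } w.
\end{align*}

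\textbf{Step 3 (specialize and normalize).} Apply Step 2 at $w=w^{\mathrm{NE}}$ and divide by $r^\star_\tau=|\mathrm{FR}_\tau(w^\star)|>0$. Then write
\begin{align*}
  C^\star_{\mathrm{IB}}(w^{\mathrm{NE}}) = \frac{C^\star_{\mathrm{IB}}(w^{\mathrm{NE}})}{C^\star_{\mathrm{IB}}(w^\star)}\,C^\star_{\mathrm{IB}}(w^\star).
\end{align*}
Since $w^{\mathrm{NE}}\in\mathrm{NE}(\mathcal{G})$ and $w^\star$ is a minimizer of $C^\star_{\mathrm{IB}}$, the ratio is at most $\mathrm{PoA}(\mathcal{G})$ by \autoref{def:poa}. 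This yields
\begin{align*}
  \frac{|\mathrm{FR}_\tau(w^{\mathrm{NE}})|}{r^\star_\tau}\le \frac{C^\star_{\mathrm{IB}}(w^\star)}{\beta_R\tau r^\star_\tau}\,\mathrm{PoA}(\mathcal{G})=\kappa^\star_\tau\,\mathrm{PoA}(\mathcal{G}).
\end{align*}
The argument is identical for $\mathcal{G}=\mathrm{CE}$ and $\mathcal{G}=\mathrm{PGAC}$, because it uses only the definition of PoA and never the structure of the game.

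\textbf{Main obstacle.} Everything after Step 1 is bookkeeping, so the delicate points are in Steps 1 and 2. The first is matching the expectation in \autoref{def:free_riding} to the conditional mutual information inside $\mathrm{TC}$. The second is the nonnegativity of every term of $C^\star_{\mathrm{IB}}$ used in Step 2. For continuous head streams, differential entropies can be negative, but mutual informations remain nonnegative, so writing $\mathrm{TC}$ as a sum of conditional mutual informations avoids the issue. We also need $C^\star_{\mathrm{IB}}(w^\star)>0$ so that PoA is well defined. This follows because $r^\star_\tau>0$ gives $\mathrm{TC}(w^\star)\ge\tau>0$ by Step 1.
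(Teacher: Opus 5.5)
Your proof is correct and follows essentially the same route as the paper's. You bound the free-rider count via $\tau\,|\mathrm{FR}_\tau(w)|\le\sum_i I(Z_i;Z_{<i}\mid X)=\mathrm{TC}(Z_{1:H}\mid X)$ using the chain rule, then use $\beta_R\,\mathrm{TC}\le C^\star_{\mathrm{IB}}(w)$, evaluate at $w^{\mathrm{NE}}$, divide by $r^\star_\tau$, and apply the definition of PoA. Two of your remarks are small refinements the paper leaves implicit: that $r^\star_\tau>0$ forces $C^\star_{\mathrm{IB}}(w^\star)\ge\beta_R\tau>0$, so the PoA ratio is well defined, and that writing TC as a sum of conditional mutual informations avoids the problem of negative differential entropies.
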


\begin{proof}[Proof sketch]
  Markov and the entropy chain rule yield the absolute bound
  $|\mathrm{FR}_\tau(w)| \le C^\star_{\mathrm{IB}}(w)/(\beta_R \tau)$ for any $w$
  (\autoref{app:free_riding_proof}).
  Apply at $w^{\mathrm{NE}}$, divide by $r^\star_\tau$, and invoke PoA:
  \begin{align*}
    \frac{|\mathrm{FR}_\tau(w^{\mathrm{NE}})|}{r^\star_\tau}
    \le \frac{C^\star_{\mathrm{IB}}(w^{\mathrm{NE}})}{\beta_R \tau \, r^\star_\tau}
    \le \mathrm{PoA} \cdot \underbrace{\frac{C^\star_{\mathrm{IB}}(w^\star)}{\beta_R \tau \,
    r^\star_\tau}}_{\kappa^\star_\tau}.
  \end{align*}
  As with \autoref{cor:hallucination}, $\kappa^\star_\tau$ depends only on the social optimum;
  PoA is the sole equilibrium-dependent factor.
\end{proof}

This is our second concrete result: an \emph{excess} free-riding bound paralleling \autoref{cor:hallucination}.
The problem constant $\kappa^\star_\tau$ is fixed; PoA is again the only lever.
Together, both corollaries show that PoA controls \emph{both} failure modes---tail risk and capacity waste---so
reducing $\Gamma(G)$ via regularization improves reliability and efficiency simultaneously.

\begin{corollary}[Efficiency Is Reliability]\label{cor:simultaneous}
  Fix $\delta\in(0,1]$ and $\tau>0$.
  For $\mathcal{G}\in\{\mathrm{CE},\mathrm{PGAC}\}$ define worst-case equilibrium reliability metrics
  \begin{align}
    \mathcal{R}_{\mathrm{hall}}(\mathcal{G};\delta)
    &\triangleq
    \max_{w^{\mathrm{NE}}\in\mathrm{NE}(\mathcal{G})} \Pr(H_\delta\mid w^{\mathrm{NE}}),
    \\
    \mathcal{R}_{\mathrm{fr}}(\mathcal{G};\tau)
    &\triangleq
    \max_{w^{\mathrm{NE}}\in\mathrm{NE}(\mathcal{G})} |\mathrm{FR}_\tau(w^{\mathrm{NE}})|.
  \end{align}
  Then, for each $\mathcal{G}$,
  \begin{align}
    \mathcal{R}_{\mathrm{hall}}(\mathcal{G};\delta)
    &\;\le\;
    \frac{\mathrm{PoA}(\mathcal{G})}{2\delta^2}\cdot \min_w C^\star_{\mathrm{IB}}(w),\\
    \mathcal{R}_{\mathrm{fr}}(\mathcal{G};\tau)
    &\;\le\;
    \frac{\mathrm{PoA}(\mathcal{G})}{\beta_R\tau}\cdot \min_w C^\star_{\mathrm{IB}}(w).
  \end{align}
  Consequently, holding $\min_w C^\star_{\mathrm{IB}}(w)$ fixed, any reduction in $\mathrm{PoA}(\mathcal{G})$
  yields uniformly tighter \emph{bounds} on hallucination incidence and redundancy-driven free-riding at
  equilibrium (under the stated assumptions).
\end{corollary}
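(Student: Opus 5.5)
The plan is to derive both bounds by reusing the absolute (non-excess) inequalities that underlie Corollaries~\ref{cor:hallucination} and~\ref{cor:free-riding}. Then take the maximum over equilibria and apply the definition of $\mathrm{PoA}$ (\autoref{def:poa}). No new game-theoretic machinery is needed. The structure is: a per-$w$ bound on each reliability metric by $C^\star_{\mathrm{IB}}(w)$, followed by a worst-case-equilibrium step.

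\textbf{Step 1 (pointwise hallucination bound).} For an arbitrary $w$, I would establish $\Pr(H_\delta\mid w)\le C^\star_{\mathrm{IB}}(w)/(2\delta^2)$ via the Markov--Pinsker--data-processing chain quoted in the sketch of \autoref{cor:hallucination}. Markov applied to $E_w(X)^2$ gives $\Pr(E_w\ge\delta)\le \mathbb{E}[E_w^2]/\delta^2$. Pinsker gives $E_w(x)^2\le \tfrac12\mathrm{KL}(y_\star(x)\,\|\,\hat y_w(x))$. The expected KL is at most the cross-entropy distortion term, since the entropy of $y_\star$ is nonnegative and data processing lets us pass from $\hat y_w$ to $q_w(\cdot\mid Z_{1:H})$. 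The distortion term is in turn at most $C^\star_{\mathrm{IB}}(w)$, because $\mathrm{TC}\ge0$, $I(Z_i;X)\ge0$, and $\beta_R,\beta_C\ge0$.

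\textbf{Step 2 (pointwise free-riding bound).} I would establish $|\mathrm{FR}_\tau(w)|\le C^\star_{\mathrm{IB}}(w)/(\beta_R\tau)$, as in \autoref{app:free_riding_proof}. By the entropy chain rule, $\mathrm{TC}(Z_{1:H}\mid X)=\sum_i I(Z_i;Z_{<i}\mid X)$. Each term is nonnegative, so restricting the sum to heads in $\mathrm{FR}_\tau(w)$ gives $\tau\,|\mathrm{FR}_\tau(w)|\le \mathrm{TC}$. Finally, $\beta_R\mathrm{TC}\le C^\star_{\mathrm{IB}}(w)$, since the other two terms are nonnegative.

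\textbf{Step 3 (worst-case equilibrium).} Take $\max_{w\in\mathrm{NE}(\mathcal{G})}$ on both sides of Steps 1--2. By \autoref{def:poa}, $\max_{w\in\mathrm{NE}}C^\star_{\mathrm{IB}}(w)=\mathrm{PoA}(\mathcal{G})\cdot\min_w C^\star_{\mathrm{IB}}(w)$, which yields the two displayed inequalities. The ``consequently'' clause is then immediate, since both right-hand sides are monotone increasing in $\mathrm{PoA}(\mathcal{G})$ for fixed $\min_w C^\star_{\mathrm{IB}}$.

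The main obstacle is Step 1's justification that $\mathbb{E}[\mathrm{KL}(y_\star\|\hat y_w)]$ is dominated by the distortion term. This requires a nonnegative distortion and $\hat y_w(x)$ defined as the encoder-averaged decoder, $\mathbb{E}_{Z\sim p_w(\cdot\mid x)}q_w(\cdot\mid Z)$. Jensen (convexity of $-\log$) then gives $\mathrm{CE}(y_\star,\hat y_w)\le\mathbb{E}[-\log q_w(Y\mid Z)]$, and $\mathrm{KL}=\mathrm{CE}-H(y_\star)\le\mathrm{CE}$. I would state this identification of $\hat y_w$ explicitly as the interpretation under which the data-processing step holds. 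I would also note the technical requirements that the minimum and maximum are attained and that $\min_w C^\star_{\mathrm{IB}}>0$, so that $\mathrm{PoA}$ is well defined.
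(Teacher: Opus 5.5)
Your proposal is correct and follows the paper's argument. The paper likewise combines the absolute bounds $\Pr(H_\delta\mid w)\le C^\star_{\mathrm{IB}}(w)/(2\delta^2)$ (via Markov, Pinsker and Jensen) and $|\mathrm{FR}_\tau(w)|\le C^\star_{\mathrm{IB}}(w)/(\beta_R\tau)$ (via Markov and the chain rule) from the appendix proofs of Corollaries~\ref{cor:hallucination} and~\ref{cor:free-riding}, takes maxima over equilibria, and applies the definition of $\mathrm{PoA}$. Working directly with the absolute bounds rather than the normalized excess bounds, as you do, also avoids needing the positivity assumptions $\Pr(H_\delta\mid w^\star)>0$ and $r^\star_\tau>0$, which this corollary does not state.
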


\begin{proof}[Proof sketch]
  Direct combination of \autoref{cor:hallucination} and \autoref{cor:free-riding}: take maxima over equilibria
  on both sides of each bound. Both depend on the \emph{same} $\mathrm{PoA}(\mathcal{G})$ term, which in turn
  depends on the same $\Gamma(G)$. Any intervention that reduces $\Gamma(G)$---such as Barlow Twins
  regularization---therefore tightens both bounds simultaneously.
\end{proof}

\autoref{cor:simultaneous} packages together our final theoretical contribution: it shows
that capacity and
reliability can improve \emph{simultaneously} at fixed parameter count. This breaks the conventional wisdom
--- rooted in calibration-accuracy tradeoffs \cite{guo2017calibration} and the ``alignment tax'' of RLHF
\cite{lin2024alignment} --- that reliability costs capability. The mechanism is that both failure modes share a
common cause: high $\Gamma(G)$ from unpriced externalities. Regularizers that reduce $\Gamma(G)$ thus improve
\emph{both} metrics by moving to a lower-PoA equilibrium, not by trading one against the other.

%%%%%%%%%%%%%%%%%%%%%%%%%%%%%%%%%%%%%%%%%%%%%%%%%%%%%%%%%%%%%%%%%%%%%%%%%%%%%%%
% GAME-LORA
%%%%%%%%%%%%%%%%%%%%%%%%%%%%%%%%%%%%%%%%%%%%%%%%%%%%%%%%%%%%%%%%%%%%%%%%%%%%%%%
\section{GAME-LoRA: A Practical Demonstration}

Next, we demonstrate via GAME-LoRA that the MultiHeadPGAC game is practical and can be
implemented in modern transformers via lightweight LoRA adapters.

\subsection{Architecture}

\begin{figure}[ht]
  \centering
  \includegraphics[width=\columnwidth]{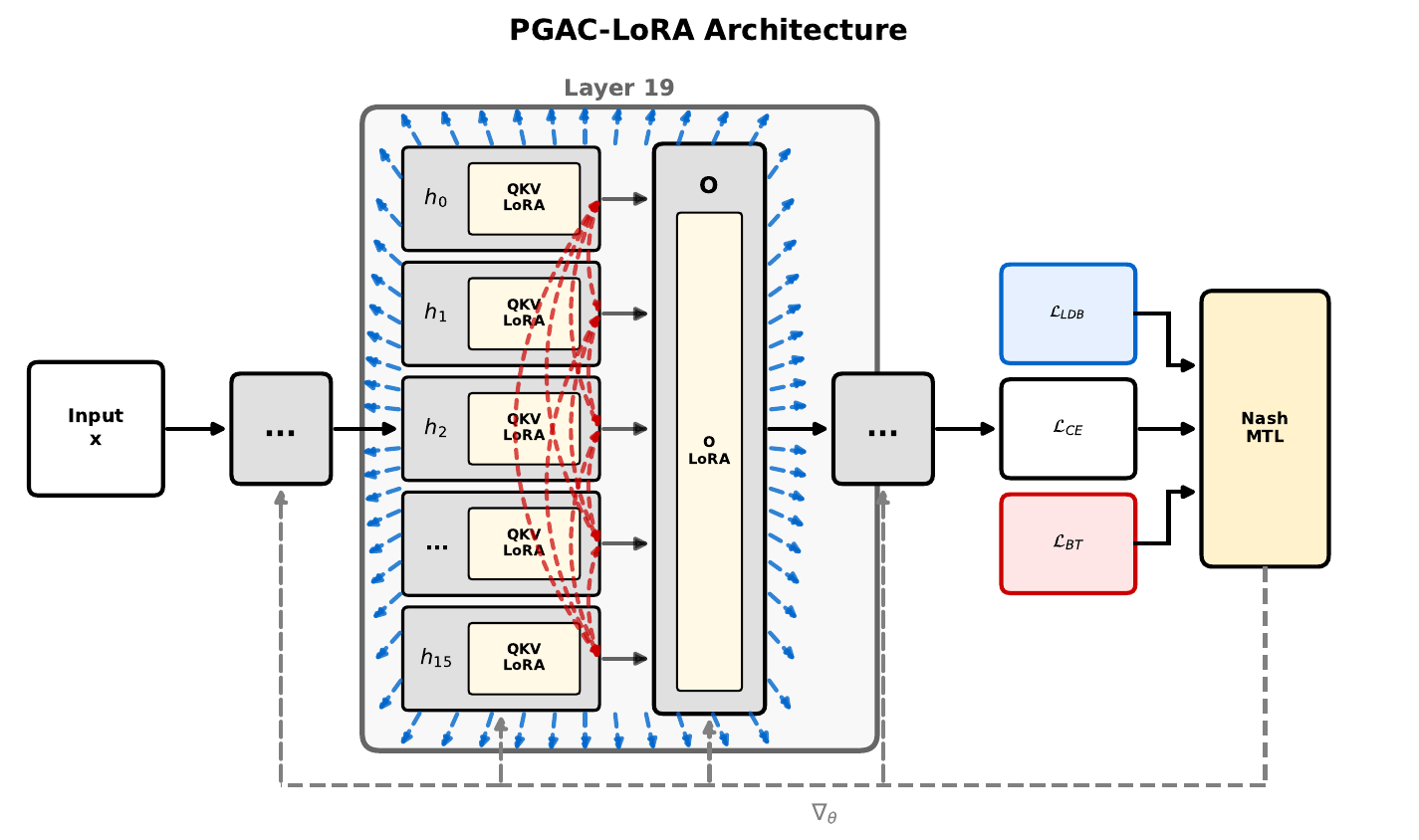}
  \caption{GAME-LoRA architecture. Layer 19 with 16 attention heads ($h_0$--$h_{15}$), each containing QKV
    LoRA adapters, plus O LoRA in the output projection. Red dashed arrows: $\mathcal{L}_{\mathrm{ABT}}$
    pairwise head decorrelation. Blue dashed arrows: $\mathcal{L}_{\mathrm{LDB}}$ expansion pressure on head
    Gram matrix. Three losses ($\mathcal{L}_{\mathrm{CE}}$, $\mathcal{L}_{\mathrm{LDB}}$,
  $\mathcal{L}_{\mathrm{ABT}}$) combined via Nash-MTL arbitration.}
  \label{fig:architecture}
\end{figure}

GAME-LoRA applies LoRA adapters to all attention projections (Q, K, V, O) across all layers, while
computing regularization losses at a single late \emph{design layer}.
Pretrained weights remain frozen; only adapter weights are updated.
The total loss combines cross-entropy with two regularizers:
\begin{equation}
  \mathcal{L} = \mathcal{L}_{\mathrm{CE}} + \lambda_{\mathrm{LDB}} \mathcal{L}_{\mathrm{LDB}} +
  \lambda_{\mathrm{ABT}} \mathcal{L}_{\mathrm{ABT}},
\end{equation}
with gradients arbitrated via Nash-MTL \cite{navon2022multi}. Full hyperparameters in \autoref{app:implementation}.

\subsection{Regularization Losses}

We instantiate the externality charges from \autoref{def:multiheadpgac} using two standard objectives
at different granularities.
The \textbf{log-determinant barrier} \citep{federici2020mvib} operates at \emph{head level}:
\begin{equation}
  \mathcal{L}_{\mathrm{LDB}} = -\log \det(G + \epsilon I),
\end{equation}
where $G \in \mathbb{R}^{H \times H}$ is the head interaction matrix (\autoref{def:interaction_matrix}).
The \textbf{Barlow Twins loss} \citep{zbontar2021barlow} operates at \emph{feature level}. For each head
pair $(i,j)$, let $\hat{C}_{ij} = \frac{1}{N} \tilde{O}_i^\top \tilde{O}_j \in \mathbb{R}^{d_h \times d_h}$
be the cross-correlation of z-scored head outputs. The loss is:
\begin{equation}
  \mathcal{L}_{\mathrm{ABT}} = \mathop{\mathbb{E}}_{i < j} \left[ w_{ij} \|\hat{C}_{ij} - I\|_F^2 \right],
\end{equation}
where $w_{ij} \triangleq \alpha + (1{-}\alpha) \cdot \mathrm{softplus}(-\beta (G_{ij} - \tau))$ adaptively
weights pairs. Note that $i < j$ sums over \emph{cross-head} pairs only; targeting $I$ (not zero) for
$\hat{C}_{ij}$ penalizes off-diagonal entries (feature $k$ of head $i$ correlating with feature $l \neq k$
of head $j$) while the diagonal constraint ($[\hat{C}_{ij}]_{kk} \to 1$) anchors a shared feature basis,
preventing spurious rotations that would otherwise confound the off-diagonal penalty
\citep{zbontar2021barlow}. This extends Barlow Twins from same-sample augmentation pairs to cross-head
decorrelation. Hyperparameters are in \autoref{app:implementation}.

%%%%%%%%%%%%%%%%%%%%%%%%%%%%%%%%%%%%%%%%%%%%%%%%%%%%%%%%%%%%%%%%%%%%%%%%%%%%%%%
% EXPERIMENTS
%%%%%%%%%%%%%%%%%%%%%%%%%%%%%%%%%%%%%%%%%%%%%%%%%%%%%%%%%%%%%%%%%%%%%%%%%%%%%%%
\section{Experimental Validation}

Here, we demonstrate at small scale that game-theoretic regularization can improve the reliability of modern
transformers across hallucination and knowledge benchmarks.

\subsection{Experimental Setup}
We fine-tune Qwen2.5-0.5B using LoRA on The Pile (20M tokens) and evaluate on six hallucination (HaluEval,
TruthfulQA, MemoTrap) and four knowledge benchmarks (NQ, PopQA, WikiText, WinoGrande). The ``Baseline'' column
reports parameter- and data-matched LoRA fine-tuned with CE loss only. Other baselines include inference-time
methods (CAD, ActDec) and training-time methods (Disagreement, ME). Full details in
\autoref{app:implementation}.

\subsection{Main Results}
\autoref{tab:main} presents results across all benchmarks.
GAME-LoRA achieves best-in-class hallucination reduction (+8.1\% overall), winning 5 of 6 individual
benchmarks and outperforming the next-best method by 80\% (CAD, +4.5\%).
Critically, GAME-LoRA is the \emph{only} method to improve MemoTrap (+1.2\%), a memorization-based
hallucination benchmark where no other method improves.
This suggests GAME-LoRA addresses a failure mode---over-reliance on memorized patterns---that other
methods cannot.

Unlike ActDec, which trades knowledge for hallucination improvement ($-$2.8\% knowledge), GAME-LoRA
achieves strong hallucination gains (+8.1\%) while preserving knowledge ($-$0.1\%).
This validates \autoref{cor:simultaneous}: internalizing \emph{both} coordination and competition
externalities expands the Pareto frontier, achieving what partial interventions cannot.

% \subsection{Why Baselines Fall Short}

% No baseline addresses the implicit game structure of multi-head attention (\autoref{sec:implicit_game}).

% \paragraph{Inference-time methods cannot reshape the equilibrium.}
% CAD and ActDec operate at decoding time on heads trained under MultiHeadCE---the implicit game with
% unpriced externalities.
% They cannot change which equilibrium the model occupies; heads remain correlated, free-riders remain inactive.
% CAD achieves strong hallucination gains (+3.0\%) but cannot match GAME-LoRA's MemoTrap improvement.
% ActDec's activation sharpening trades off sharply: $-$3.9\% knowledge, $-$17.7\% WikiText perplexity.
% Post-hoc corrections to suboptimal play cannot expand the frontier.

% \paragraph{Training-time baselines ignore the game.} Disagreement Regularization \emph{does} modify
% training, but does not recognize the game-theoretic structure. It implements heuristic decorrelation
% via cosine distance maximization without modeling the coordination-competition tradeoff that determines
% equilibrium quality. Critically, it does not improve MemoTrap ($-$0.2\%), suggesting that uniform
% decorrelation disrupts memorization-dependent circuits without replacing them.

% GAME-LoRA trains under MultiHeadPGAC, achieving a new equilibrium with lower $\Gamma(G)$ and larger active
% set $|A|$ by \emph{explicitly} pricing both externalities.
% The BT loss decorrelates (competition); the LDB loss maintains full-rank covariance (coordination).
% This dual mechanism is why GAME-LoRA uniquely improves MemoTrap while dominating other benchmarks.np

\subsection{Efficiency Is Reliability}

Prior work frames hallucination reduction and knowledge retention as competing objectives
\citep{zhang2024siren}.
Our results suggest this tradeoff is an artifact of incomplete interventions that address only one
externality.

Hallucination detection benefits from diversity; knowledge retrieval requires heads to \emph{coordinate}
on the correct answer; generation requires \emph{both}.
GAME-LoRA's +8.1\% hallucination improvement with preserved knowledge ($-$0.1\%) --- without the generation
collapse seen in ActDec (-17.7\% WikiText) --- demonstrates both are achievable: decorrelating error modes
(\autoref{cor:hallucination}) while preserving coordinated representations (\autoref{cor:free-riding}).
Finally, unlike CAD or ActDec, GAME-LoRA incurs \emph{zero} inference cost.

%%%%%%%%%%%%%%%%%%%%%%%%%%%%%%%%%%%%%%%%%%%%%%%%%%%%%%%%%%%%%%%%%%%%%%%%%%%%%%%
% MECHANISTIC ANALYSIS
%%%%%%%%%%%%%%%%%%%%%%%%%%%%%%%%%%%%%%%%%%%%%%%%%%%%%%%%%%%%%%%%%%%%%%%%%%%%%%%
\section{Mechanistic Analysis}
\begin{table}[t]
  \centering
  \caption{Ablation study: average $\pm\Delta$\% by category relative to Qwen2.5-0.5B baseline. BT = Barlow Twins, LDB = log-det barrier. Best per column in \textbf{bold}. Full details in Appendix~\ref{app:ablations}.}
  \small
  \label{tab:ablation_relative}
  \begin{tabular}{lcc}
    \toprule
    Method & Hallucination & Knowledge \\
    \midrule
    GAME-LoRA & \textbf{+8.0\%} & -0.1\% \\
    Baseline + BT & +5.1\% & -1.2\% \\
    Baseline + LDB & +1.7\% & \textbf{+4.9\%} \\
    Baseline + LDB w/o NashMTL & -0.1\% & -2.4\% \\
    \bottomrule
  \end{tabular}
\end{table}
\subsection{Ablations}

\autoref{tab:ablation_relative} shows that the full GAME-LoRA method achieves best hallucination (+8.0\%)
while preserving knowledge ($-$0.1\%). Individual components show distinct tradeoffs: BT alone achieves
+5.1\% hallucination but $-$1.2\% knowledge, while LDB alone achieves +1.7\% hallucination with +4.9\%
knowledge. The combination yields the best overall balance of hallucination reduction with knowledge parity.

\subsection{Mechanistic Signatures}

Four mechanistic signatures validate our game framework.

\paragraph{1. Selective coordination, not uniform decorrelation.}
Figures~\ref{fig:coalitions_hero} and ~\ref{fig:coalitions_support} reveal \emph{selective} restructuring:
intra-cluster coupling strengthens while inter-cluster coupling weakens (Mann-Whitney $p < 10^{-5}$).
Heads self-organize into coalitions that coordinate internally while differentiating externally---the
equilibrium structure predicted by public goods games with local competition.

\paragraph{2. $\Gamma(G)$ as causal predictor of hallucination.}
\autoref{cor:hallucination} predicts that hallucination probability scales with PoA, which scales with
interaction strength $\Gamma(G) = \|G - I\|_F$. \autoref{fig:hallucination} instantiates
\autoref{cor:hallucination} with $c = 0.241$ and $\kappa_\delta^\star \lambda = 683$, achieving $p < 0.05$.
The fitted $c \cdot \Gamma \approx 0.2 \ll 1$ confirms non-vacuousness. This is not merely correlation---when
heads are coupled (high $\Gamma$), their errors compound rather than cancel, producing hallucinations.

\paragraph{3. Emergent coalition structure.}
The 4-cluster structure in \autoref{fig:coalitions_hero} is discovered via post-hoc biclustering on the
\emph{final} $G$ matrix---not imposed during training. GAME-LoRA specifies only pairwise decorrelation
and full-rank pressure; coalition count, membership, and coupling patterns all emerge from optimization.
This provides strong evidence that game-theoretic equilibria are genuine attractors of training dynamics.

\begin{figure}[t]
  \centering
  \small
  \includegraphics[width=0.85\columnwidth]{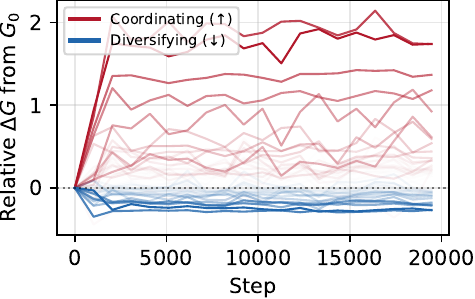}
  \caption{
    Relative change in pairwise coupling $G_{ij}$; red traces strengthen (coordinate), blue weaken (diversify).
  }
  \label{fig:dynamics}
\end{figure}

\paragraph{4. Bifurcating pair dynamics.}
\autoref{fig:dynamics} shows head pairs \emph{bifurcate} early in training: some strengthen coupling
(red, intra-coalition) while others weaken (blue, inter-coalition). This sharp separation is consistent
with mixed equilibria in anti-coordination games, where gradient descent selects among multiple stable
configurations based on initialization.

\section{Related Work}

Our work draws on four perspectives: game-theoretic learning, mechanistic interpretability of attention,
reliability engineering, and information-theoretic regularization.

\paragraph{Game-theoretic learning.}
Potential games admit tractable equilibrium analysis when individual incentives align with a global objective
\citep{monderer1996potential}. \citet{balduzzi2018mechanics} extended this to differentiable games,
decomposing dynamics into potential and Hamiltonian components. \citet{gemp2022d3c} introduced differentiable
PoA bounds to design training incentives for equilibrium welfare. Recent alignment work has
adopted similar concepts: Direct Nash Optimization \citep{rosset2024dno} and Magnetic Preference Optimization
\citep{wang2025mpo} target Nash equilibria under preference models. We extend this program \emph{inward},
modeling head interactions as a game and deriving the first PoA bounds for transformer internals.

\paragraph{Attention heads.}
Empirical studies reveal systematic underutilization of multi-head capacity: most heads can be pruned with
minimal loss \citep{michel2019sixteen}, and a minority of specialized heads carry most functional load
\citep{voita2019story}. Mechanistic work shows that coordinated head circuits underlie emergent capabilities:
induction heads enable in-context learning \citep{olsson2022induction}, while retrieval heads causally
determine long-context factuality \citep{wu2025retrievalhead}. \citet{li2018disagreement} proposed
disagreement regularization with local pairwise penalties. We provide a game-theoretic explanation for
redundancy (free-riding as stable equilibrium) and generalize diversity penalties with global coordination
objectives \citep{navon2022multi}.

\paragraph{Reliability and hallucination.}
Hallucination mitigation spans training-time approaches (RLHF \citep{ouyang2022instructgpt}, synthetic tasks
\citep{jones2024syntra}), inference-time methods (RAG \citep{lewis2020rag}, chain-of-verification
\citep{dhuliawala2023cove}), and detection \citep{farquhar2024semanticentropy}. A persistent challenge is the
\emph{alignment tax}: reliability interventions often degrade capability \citep{lin2023hma,huang2025safetytax}.
\citet{chakrabarti2025neural} proved the first hallucination tail bounds for ensembled models, showing
diversity reduces error correlation. We extend this to attention heads in standard transformers, providing
game-theoretic foundations and PoA bounds that explain why naive decorrelation is insufficient.

\paragraph{Information-theoretic regularization.}
The Information Bottleneck \citep{tishby2000information,alemi2017vib} trades compression against relevance;
Multi-View IB \citep{federici2020mvib} extends this to multiple views, analogous to multi-head attention.
Total correlation \citep{chen2018betatcvae} captures higher-order dependencies; Barlow Twins
\citep{zbontar2021barlow} provides a practical decorrelation objective. We integrate these tools by defining
head redundancy as \emph{social cost} in our game-theoretic framework, connecting spectral regularization to
equilibrium welfare through PoA bounds.

\section{Discussion}
\begin{table}[t]
  \centering
  \small
  \caption{Preliminary scaling results across model sizes. Models trained on The Pile with 20--100M tokens
  depending on size. Metrics averaged over 3 evaluation seeds.}
  \label{tab:scaling}
  \begin{tabular}{lccc}
    \toprule
    Task & \multicolumn{3}{c}{0.5B} \\
    & Base & GAME & $\pm\Delta$\% \\
    \midrule
    NQ & 0.066 & \textbf{0.067} & +2.1\% \\
    HE-QA & 0.376 & \textbf{0.445} & +18.3\% \\
    Winogrande & \textbf{0.573} & 0.565 & -1.4\% \\
    \midrule
    \midrule
    Task & \multicolumn{3}{c}{3B} \\
    & Base & GAME & $\pm\Delta$\% \\
    \midrule
    NQ & \textbf{0.188} & 0.184 & -2.4\% \\
    HE-QA & 0.378 & \textbf{0.429} & +13.5\% \\
    Winogrande & \textbf{0.697} & 0.684 & -1.9\% \\
    \bottomrule
  \end{tabular}
\end{table}

\paragraph{Theoretical contributions.}
We provide the first Price of Anarchy analysis for attention mechanisms. Our adaptation of Roughgarden's
smooth games framework \cite{roughgarden2015intrinsic} introduces an \emph{endogenous} smoothness parameter
$\mu = c \cdot \Gamma(G)$ that tightens as heads decorrelate, making the theory prescriptive rather than
merely descriptive.

\paragraph{Hallucination as coordination failure.}
Standard CE training finds \emph{a} Nash equilibrium, but not necessarily a \emph{good} one. Hallucination,
redundant heads, and dead neurons are symptoms of unpriced externalities: heads don't pay the cost of
duplicating each other or compounding correlated errors. GAME-LoRA internalizes these externalities
via Pigouvian taxes, addressing suboptimal coordination.

\paragraph{Limitations.}
Our bounds rest on modeling assumptions: (i) we optimize spectral proxies ($\mathcal{L}_{\mathrm{LDB}}$,
$\mathcal{L}_{\mathrm{ABT}}$) rather than the IB objective; (ii) $\Gamma(G)$ captures only pairwise
correlations under Gaussian assumptions; (iii) local linearization may not hold globally due to attention
nonlinearities.

\paragraph{Scaling.}
Our theory makes no scale-dependent assumptions: \autoref{thm:poa} bounds PoA via $\Gamma(G)$ regardless of
model size. Preliminary experiments on Qwen2.5-3B in \autoref{tab:scaling} provide directional evidence that the
mechanism transfers. Full-scale validation remains future work.

\paragraph{Conclusion.}
Attention is all you need, but \emph{coordination is what you lack.} The gap between what transformers learn
and what we want is a game-theoretic gap: heads optimize myopically, ignoring externalities. GAME-LoRA is one
instantiation; we expect the framework to generalize.

%%%%%%%%%%%%%%%%%%%%%%%%%%%%%%%%%%%%%%%%%%%%%%%%%%%%%%%%%%%%%%%%%%%%%%%%%%%%%%%
% ACKNOWLEDGEMENTS & IMPACT
%%%%%%%%%%%%%%%%%%%%%%%%%%%%%%%%%%%%%%%%%%%%%%%%%%%%%%%%%%%%%%%%%%%%%%%%%%%%%%%

% \section*{Acknowledgements}

% Do not include acknowledgements in the initial version of the paper submitted for blind review.

% \section*{Impact Statement}

% This paper presents work whose goal is to advance the field of Machine Learning. There are many potential
% societal consequences of our work, none which we feel must be specifically highlighted here.

\bibliography{main}

\begin{thebibliography}{39}
\providecommand{\natexlab}[1]{#1}
\providecommand{\url}[1]{\texttt{#1}}
\expandafter\ifx\csname urlstyle\endcsname\relax
  \providecommand{\doi}[1]{doi: #1}\else
  \providecommand{\doi}{doi: \begingroup \urlstyle{rm}\Url}\fi

\bibitem[Alemi et~al.(2017)Alemi, Fischer, Dillon, and Murphy]{alemi2017vib}
Alemi, A.~A., Fischer, I., Dillon, J.~V., and Murphy, K.
\newblock Deep variational information bottleneck.
\newblock In \emph{International Conference on Learning Representations}, 2017.

\bibitem[Balduzzi et~al.(2018)Balduzzi, Racaniere, Martens, Foerster, Tuyls,
  and Graepel]{balduzzi2018mechanics}
Balduzzi, D., Racaniere, S., Martens, J., Foerster, J., Tuyls, K., and Graepel,
  T.
\newblock The mechanics of n-player differentiable games.
\newblock In \emph{International Conference on Machine Learning}, 2018.

\bibitem[Chakrabarti \& Balachundhar(2025)Chakrabarti and
  Balachundhar]{chakrabarti2025neural}
Chakrabarti, K. and Balachundhar, N.
\newblock Neural diversity regularizes hallucinations in language models.
\newblock \emph{arXiv preprint arXiv:2510.20690}, 2025.

\bibitem[Chen et~al.(2024)Chen, Liu, Chen, Gu, Wu, Tao, Fu, and
  Ye]{chen2024actdec}
Chen, C., Liu, K., Chen, Z., Gu, Y., Wu, Y., Tao, M., Fu, Z., and Ye, J.
\newblock Inside: {LLM}s' internal states retain the power of hallucination
  detection.
\newblock \emph{arXiv preprint arXiv:2402.03744}, 2024.

\bibitem[Chen et~al.(2018)Chen, Li, Grosse, and Duvenaud]{chen2018betatcvae}
Chen, R. T.~Q., Li, X., Grosse, R., and Duvenaud, D.
\newblock Isolating sources of disentanglement in variational autoencoders.
\newblock \emph{arXiv preprint arXiv:1802.04942}, 2018.

\bibitem[Dhuliawala et~al.(2023)Dhuliawala, Komeili, Xu, Raileanu, Li,
  Celikyilmaz, and Weston]{dhuliawala2023cove}
Dhuliawala, S., Komeili, M., Xu, J., Raileanu, R., Li, X., Celikyilmaz, A., and
  Weston, J.
\newblock Chain-of-verification reduces hallucination in large language models.
\newblock \emph{arXiv preprint arXiv:2309.11495}, 2023.

\bibitem[Farquhar et~al.(2024)Farquhar, Kossen, Kuhn, and
  Gal]{farquhar2024semanticentropy}
Farquhar, S., Kossen, J., Kuhn, L., and Gal, Y.
\newblock Detecting hallucinations in large language models using semantic
  entropy.
\newblock \emph{Nature}, 630:\penalty0 625--630, 2024.

\bibitem[Federici et~al.(2020)Federici, Dutta, Forr{\'e}, Kushman, and
  Akata]{federici2020mvib}
Federici, M., Dutta, A., Forr{\'e}, P., Kushman, N., and Akata, Z.
\newblock Learning robust representations via multi-view information
  bottleneck.
\newblock In \emph{International Conference on Learning Representations}, 2020.

\bibitem[Gemp et~al.(2022)Gemp, McKee, Everett, Du{\'e}{\~n}ez-Guzm{\'a}n,
  Bachrach, Balduzzi, and Tacchetti]{gemp2022d3c}
Gemp, I., McKee, K.~R., Everett, R., Du{\'e}{\~n}ez-Guzm{\'a}n, E.~A.,
  Bachrach, Y., Balduzzi, D., and Tacchetti, A.
\newblock {D3C}: Reducing the price of anarchy in multi-agent learning.
\newblock In \emph{International Conference on Autonomous Agents and Multiagent
  Systems}, 2022.

\bibitem[Geva et~al.(2023)Geva, Bastings, Filippova, and
  Globerson]{geva2023dissecting}
Geva, M., Bastings, J., Filippova, K., and Globerson, A.
\newblock Dissecting recall of factual associations in auto-regressive language
  models.
\newblock In \emph{Conference on Empirical Methods in Natural Language
  Processing}, 2023.

\bibitem[Guo et~al.(2017)Guo, Pleiss, Sun, and Weinberger]{guo2017calibration}
Guo, C., Pleiss, G., Sun, Y., and Weinberger, K.~Q.
\newblock On calibration of modern neural networks.
\newblock In \emph{Proceedings of the 34th International Conference on Machine
  Learning}, volume~70 of \emph{Proceedings of Machine Learning Research}, pp.\
   1321--1330. PMLR, 2017.

\bibitem[Huang et~al.(2025)Huang, Hu, Ilhan, Tekin, Yahn, Xu, and
  Liu]{huang2025safetytax}
Huang, T., Hu, S., Ilhan, F., Tekin, S.~F., Yahn, Z., Xu, Y., and Liu, L.
\newblock Safety tax: Safety alignment makes your large reasoning models less
  reasonable.
\newblock \emph{arXiv preprint arXiv:2503.00555}, 2025.

\bibitem[Jones et~al.(2024)Jones, Palangi, Sim{\~o}es, Chandrasekaran,
  Mukherjee, Mitra, Awadallah, and Kamar]{jones2024syntra}
Jones, E., Palangi, H., Sim{\~o}es, C., Chandrasekaran, V., Mukherjee, S.,
  Mitra, A., Awadallah, A., and Kamar, E.
\newblock Teaching language models to hallucinate less with synthetic tasks.
\newblock In \emph{International Conference on Learning Representations}, 2024.

\bibitem[Kwiatkowski et~al.(2019)Kwiatkowski, Palomaki, Redfield, Collins,
  Parikh, Alberti, Epstein, Polosukhin, Devlin, Lee, et~al.]{kwiatkowski2019nq}
Kwiatkowski, T., Palomaki, J., Redfield, O., Collins, M., Parikh, A., Alberti,
  C., Epstein, D., Polosukhin, I., Devlin, J., Lee, K., et~al.
\newblock Natural questions: A benchmark for question answering research.
\newblock \emph{Transactions of the Association for Computational Linguistics},
  7:\penalty0 453--466, 2019.

\bibitem[Lewis et~al.(2020)Lewis, Perez, Piktus, Petroni, Karpukhin, Goyal,
  K{\"u}ttler, Lewis, Yih, Rockt{\"a}schel, Riedel, and Kiela]{lewis2020rag}
Lewis, P., Perez, E., Piktus, A., Petroni, F., Karpukhin, V., Goyal, N.,
  K{\"u}ttler, H., Lewis, M., Yih, W.-t., Rockt{\"a}schel, T., Riedel, S., and
  Kiela, D.
\newblock Retrieval-augmented generation for knowledge-intensive {NLP} tasks.
\newblock In \emph{Advances in Neural Information Processing Systems}, 2020.

\bibitem[Li et~al.(2018)Li, Tu, Yang, Lyu, and Zhang]{li2018disagreement}
Li, J., Tu, Z., Yang, B., Lyu, M.~R., and Zhang, T.
\newblock Multi-head attention with disagreement regularization.
\newblock In \emph{Conference on Empirical Methods in Natural Language
  Processing}, 2018.

\bibitem[Li et~al.(2023)Li, Cheng, Zhao, Nie, and Wen]{li2023halueval}
Li, J., Cheng, X., Zhao, W.~X., Nie, J.-Y., and Wen, J.-R.
\newblock {HaluEval}: A large-scale hallucination evaluation benchmark for
  large language models.
\newblock In \emph{Conference on Empirical Methods in Natural Language
  Processing}, 2023.

\bibitem[Lin et~al.(2022)Lin, Hilton, and Evans]{lin2022truthfulqa}
Lin, S., Hilton, J., and Evans, O.
\newblock {TruthfulQA}: Measuring how models mimic human falsehoods.
\newblock In \emph{Annual Meeting of the Association for Computational
  Linguistics}, 2022.

\bibitem[Lin et~al.(2023)Lin, Lin, Xiong, Diao, Liu, Zhang, Pan, Wang, Hu,
  Zhang, et~al.]{lin2023hma}
Lin, Y., Lin, H., Xiong, W., Diao, S., Liu, J., Zhang, J., Pan, R., Wang, H.,
  Hu, W., Zhang, H., et~al.
\newblock Mitigating the alignment tax of {RLHF}.
\newblock \emph{arXiv preprint arXiv:2309.06256}, 2023.

\bibitem[Lin et~al.(2024)Lin, Tan, Chen, Yuan, Wen, Chen, and
  Zhu]{lin2024alignment}
Lin, Y., Tan, Z., Chen, Y., Yuan, S., Wen, S., Chen, L., and Zhu, B.
\newblock Mitigating the alignment tax of {RLHF}.
\newblock In \emph{Proceedings of the 2024 Conference on Empirical Methods in
  Natural Language Processing}, pp.\  35--50. Association for Computational
  Linguistics, 2024.

\bibitem[Liu et~al.(2024)Liu, Zheng, Yu, Tan, Liu, Jiang, Li, Chen,
  et~al.]{liu2024memotrap}
Liu, X., Zheng, Z., Yu, L., Tan, Z., Liu, Y., Jiang, F., Li, Y., Chen, J.,
  et~al.
\newblock Memotrap: Measuring memorization in language models.
\newblock \emph{arXiv preprint arXiv:2401.01888}, 2024.

\bibitem[Mallen et~al.(2023)Mallen, Asai, Zhong, Das, Khashabi, and
  Hajishirzi]{mallen2023popqa}
Mallen, A., Asai, A., Zhong, V., Das, R., Khashabi, D., and Hajishirzi, H.
\newblock When not to trust language models: Investigating effectiveness of
  parametric and non-parametric memories.
\newblock In \emph{Annual Meeting of the Association for Computational
  Linguistics}, 2023.

\bibitem[Merity et~al.(2017)Merity, Xiong, Bradbury, and
  Socher]{merity2017wikitext}
Merity, S., Xiong, C., Bradbury, J., and Socher, R.
\newblock Pointer sentinel mixture models.
\newblock \emph{arXiv preprint arXiv:1609.07843}, 2017.

\bibitem[Michel et~al.(2019)Michel, Levy, and Neubig]{michel2019sixteen}
Michel, P., Levy, O., and Neubig, G.
\newblock Are sixteen heads really better than one?
\newblock In \emph{Advances in Neural Information Processing Systems}, 2019.

\bibitem[Monderer \& Shapley(1996)Monderer and Shapley]{monderer1996potential}
Monderer, D. and Shapley, L.~S.
\newblock Potential games.
\newblock \emph{Games and Economic Behavior}, 14\penalty0 (1):\penalty0
  124--143, 1996.

\bibitem[Navon et~al.(2022)Navon, Shamsian, Achituve, Maron, Kawaguchi,
  Chechik, and Fetaya]{navon2022multi}
Navon, A., Shamsian, A., Achituve, I., Maron, H., Kawaguchi, K., Chechik, G.,
  and Fetaya, E.
\newblock Multi-task learning as a bargaining game.
\newblock In \emph{Proceedings of the 39th International Conference on Machine
  Learning}, volume 162 of \emph{Proceedings of Machine Learning Research},
  pp.\  16428--16446. PMLR, 2022.

\bibitem[Olsson et~al.(2022)Olsson, Elhage, Nanda, Joseph, DasSarma, Henighan,
  Mann, et~al.]{olsson2022induction}
Olsson, C., Elhage, N., Nanda, N., Joseph, N., DasSarma, N., Henighan, T.,
  Mann, B., et~al.
\newblock In-context learning and induction heads.
\newblock \emph{arXiv preprint arXiv:2209.11895}, 2022.

\bibitem[Ouyang et~al.(2022)Ouyang, Wu, Jiang, Almeida, Wainwright, Mishkin,
  Zhang, Agarwal, Slama, Ray, et~al.]{ouyang2022instructgpt}
Ouyang, L., Wu, J., Jiang, X., Almeida, D., Wainwright, C.~L., Mishkin, P.,
  Zhang, C., Agarwal, S., Slama, K., Ray, A., et~al.
\newblock Training language models to follow instructions with human feedback.
\newblock In \emph{Advances in Neural Information Processing Systems}, 2022.

\bibitem[Rosset et~al.(2024)Rosset, Cheng, Mitra, Santacroce, Awadallah, and
  Xie]{rosset2024dno}
Rosset, C., Cheng, C.-A., Mitra, A., Santacroce, M., Awadallah, A., and Xie, T.
\newblock Direct {N}ash optimization: Teaching language models to self-improve
  with general preferences.
\newblock \emph{arXiv preprint arXiv:2404.03715}, 2024.

\bibitem[Roughgarden(2015)]{roughgarden2015intrinsic}
Roughgarden, T.
\newblock Intrinsic robustness of the price of anarchy.
\newblock \emph{Journal of the ACM}, 62\penalty0 (5):\penalty0 1--42, 2015.

\bibitem[Sakaguchi et~al.(2020)Sakaguchi, Le~Bras, Bhagavatula, and
  Choi]{sakaguchi2020winogrande}
Sakaguchi, K., Le~Bras, R., Bhagavatula, C., and Choi, Y.
\newblock {WinoGrande}: An adversarial winograd schema challenge at scale.
\newblock In \emph{AAAI Conference on Artificial Intelligence}, 2020.

\bibitem[Shi et~al.(2024)Shi, Han, Lewis, Tsvetkov, Zettlemoyer, and
  Yih]{shi2024cad}
Shi, W., Han, X., Lewis, M., Tsvetkov, Y., Zettlemoyer, L., and Yih, S. W.-t.
\newblock Trusting your evidence: Hallucinate less with context-aware decoding.
\newblock In \emph{North American Chapter of the Association for Computational
  Linguistics}, 2024.

\bibitem[Tishby et~al.(2000)Tishby, Pereira, and Bialek]{tishby2000information}
Tishby, N., Pereira, F.~C., and Bialek, W.
\newblock The information bottleneck method.
\newblock \emph{arXiv preprint physics/0004057}, 2000.

\bibitem[Voita et~al.(2019)Voita, Talbot, Moiseev, Sennrich, and
  Titov]{voita2019story}
Voita, E., Talbot, D., Moiseev, F., Sennrich, R., and Titov, I.
\newblock Analyzing multi-head self-attention: Specialized heads do the heavy
  lifting, the rest can be pruned.
\newblock In \emph{Annual Meeting of the Association for Computational
  Linguistics}, 2019.

\bibitem[Wang et~al.(2025)Wang, Ma, Chen, Meng, Han, Xiao, Zhang, Huo, Su, and
  Yang]{wang2025mpo}
Wang, M., Ma, C., Chen, Q., Meng, L., Han, Y., Xiao, J., Zhang, Z., Huo, J.,
  Su, W.~J., and Yang, Y.
\newblock Magnetic preference optimization: Achieving last-iterate convergence
  for language model alignment.
\newblock In \emph{International Conference on Learning Representations}, 2025.

\bibitem[Wu et~al.(2025)Wu, Wang, Xiao, Peng, and Fu]{wu2025retrievalhead}
Wu, W., Wang, Y., Xiao, G., Peng, H., and Fu, Y.
\newblock Retrieval head mechanistically explains long-context factuality.
\newblock In \emph{International Conference on Learning Representations}, 2025.

\bibitem[Yuan et~al.(2024)Yuan, Pang, Du, Chen, Zhang, and Lin]{yuan2024me}
Yuan, X., Pang, T., Du, C., Chen, K., Zhang, W., and Lin, M.
\newblock A closer look at machine unlearning for large language models.
\newblock \emph{arXiv preprint arXiv:2410.08109}, 2024.

\bibitem[Zbontar et~al.(2021)Zbontar, Jing, Misra, LeCun, and
  Deny]{zbontar2021barlow}
Zbontar, J., Jing, L., Misra, I., LeCun, Y., and Deny, S.
\newblock Barlow twins: Self-supervised learning via redundancy reduction.
\newblock In \emph{Proceedings of the 38th International Conference on Machine
  Learning}, volume 139 of \emph{Proceedings of Machine Learning Research},
  pp.\  12310--12320. PMLR, 2021.

\bibitem[Zhang et~al.(2024)Zhang, Li, Cui, Cai, Liu, Fu, Huang, Zhao, Zhang,
  Chen, et~al.]{zhang2024siren}
Zhang, Y., Li, Y., Cui, L., Cai, D., Liu, L., Fu, T., Huang, X., Zhao, E.,
  Zhang, Y., Chen, Y., et~al.
\newblock Siren's song in the {AI} ocean: A survey on hallucination in large
  language models.
\newblock \emph{arXiv preprint arXiv:2309.01219}, 2024.

\end{thebibliography}
\bibliographystyle{icml2026}

%%%%%%%%%%%%%%%%%%%%%%%%%%%%%%%%%%%%%%%%%%%%%%%%%%%%%%%%%%%%%%%%%%%%%%%%%%%%%%%
% APPENDIX
%%%%%%%%%%%%%%%%%%%%%%%%%%%%%%%%%%%%%%%%%%%%%%%%%%%%%%%%%%%%%%%%%%%%%%%%%%%%%%%
\newpage
\appendix
\onecolumn

\section{Implementation Details}
\label{app:implementation}

\subsection{GAME-LoRA}

\paragraph{Head output capture.}
For a selected layer $\ell$ with $H$ heads, let $O^{(\ell)} \in \mathbb{R}^{N \times d_h}$ denote the
per-head outputs before the output projection $W_O$ (corresponding to $h_i(x)$ in \autoref{sec:theory}).
Each head's output is z-score normalized across $N = B \cdot T$:
\begin{align}
  \tilde{O}_h &= \frac{O_h - \mu_h}{\sigma_h + \varepsilon}, \\
  \mu_h &= \tfrac{1}{N}\textstyle\sum_n O_{h,n}, \\
  \sigma_h &= \sqrt{\tfrac{1}{N}\textstyle\sum_n (O_{h,n} - \mu_h)^2}.
\end{align}

\paragraph{Feature-level cross-correlation.}
For each head pair $(i,j)$, the empirical cross-correlation matrix is:
\begin{equation}
  \hat{C}_{ij} = \tfrac{1}{N} \tilde{O}_i^\top \tilde{O}_j \in \mathbb{R}^{d_h \times d_h}.
\end{equation}

\paragraph{Head-level interaction matrix.}
The interaction matrix $G \in \mathbb{R}^{H \times H}$ combines weight coupling and gradient coupling per
\autoref{def:interaction_matrix}: $G_{ij} = \omega_{ij} \cdot \rho_{ij}$, where $\omega_{ij}$ is the cosine
similarity of output projections $W_O^{(i)}, W_O^{(j)}$ and $\rho_{ij}$ is the cosine similarity of
backpropagated gradients $g_i, g_j$.

\paragraph{Loss schedule.}
The regularization weight follows a three-phase schedule:
\begin{enumerate}
  \item Linear warmup: 0--2\% of training
  \item Constant: 2--87.9\% at $\lambda_{\mathrm{ABT}} = 0.179$, $\lambda_{\mathrm{LDB}} = 0.352$
  \item Cooldown: 87.9--100\% with $\lambda \to 0$
\end{enumerate}

\paragraph{Adaptive weighting.}
The Barlow Twins adaptive weighting uses $\alpha=0.929$ (high floor), $\beta=15.99$ (aggressive slope), and
$\tau=0$ (threshold), creating sharp diversity pressure on weakly-coupled pairs while preserving naturally
coordinated pairs.

\paragraph{Identity target and weak-pair weighting.}
The identity target ($\hat{C}_{ij} \to I$) anchors a canonical feature basis across heads: diagonal entries
$[\hat{C}_{ij}]_{kk} \to 1$ align feature $k$ across heads, while off-diagonal entries $\to 0$ decorrelate
distinct features. Targeting zero instead would allow heads to satisfy the loss via arbitrary basis
rotations---appearing decorrelated while encoding redundant information in rotated bases. Empirically,
\texttt{subtract\_identity=False} degraded hallucination metrics by 3--5\%.
The weak-pair weighting follows from the game-theoretic framing: strongly-coupled pairs ($G_{ij} \gg 0$)
already coordinate effectively, while weakly-coupled pairs ($G_{ij} \approx 0$) are potential free-riders.
Diversity pressure targets the latter. Inverse weighting (penalizing strong pairs) degraded both
hallucination ($-$2.1\%) and knowledge ($-$1.8\%).

\paragraph{EMA loss normalization.}
To stabilize training, we normalize $\mathcal{L}_{\mathrm{ABT}}$ by an exponential moving average of its magnitude.
Let $\mathcal{L}_t$ denote the raw loss at step $t$. We maintain:
\begin{equation}
  \mathrm{ema}_t \;\leftarrow\; \alpha_{\mathrm{ema}} \cdot \mathcal{L}_t + (1 - \alpha_{\mathrm{ema}}) \cdot
  \mathrm{ema}_{t-1},
\end{equation}
with $\alpha_{\mathrm{ema}} = 0.1$ and $\mathrm{ema}_0 = 20.0$ (the target scale).
The normalized loss is $\mathcal{L}_{\mathrm{ABT,norm}} = \mathcal{L}_t \cdot (\mathrm{target} / \mathrm{ema}_t)$,
which adaptively rescales the loss to have magnitude $\approx 20.0$ throughout training.

\paragraph{Ensuring positive definiteness.}
The log-determinant requires $G + \epsilon I \succ 0$. Since $G_{ij} = \omega_{ij} \cdot \rho_{ij}$ is the
Hadamard product of PSD Gram matrices (cosine similarities), $G \succeq 0$ by the Schur product theorem. The
regularization $\epsilon = 0.01$ guarantees strict positive definiteness; we additionally clamp eigenvalues
to $\max(\lambda_i, \epsilon)$ for numerical stability.

\paragraph{Training configuration.}
We fine-tune Qwen2.5-0.5B (494M parameters, 24 layers, 16 heads) using LoRA (rank 16, $\alpha=32$, dropout
0.1) on attention projections (Q, K, V, O) at all layers.
Training data: The Pile (streaming), 20M tokens, 1024-token sequences.
Optimizer: AdamW (lr $3 \times 10^{-4}$, weight decay 0.1, 2\% warmup, cosine schedule) for 19,531 steps with
effective batch size 16.
The GAME-LoRA regularization losses are computed at layer 19 ($\sim$80\% depth), reflecting the finding that
late layers are most responsible for next-token prediction \citep{geva2023dissecting}.
Training overhead from regularization is $\sim$5\% wallclock time.

\paragraph{Hyperparameter selection.}
For all methods, we run $\sim$20 trials over available hyperparameters and select configurations that
maximize hallucination reduction subject to knowledge retention remaining approximately non-negative.
The resulting Pareto-optimal configurations inform the runs used for \autoref{fig:hallucination}.

\paragraph{Benchmarks.}
\emph{Hallucination:} HaluEval \citep{li2023halueval} (dialogue, QA, summarization), TruthfulQA
\citep{lin2022truthfulqa}, MemoTrap \citep{liu2024memotrap}.
\emph{Knowledge:} Natural Questions \citep{kwiatkowski2019nq}, PopQA \citep{mallen2023popqa}, WikiText
\citep{merity2017wikitext}, WinoGrande \citep{sakaguchi2020winogrande}.

\paragraph{Aggregate improvement.}
For each task $t$, we compute relative improvement as $\delta_t = (s_t - b_t) / b_t$, where $s_t$ is the
method's score and $b_t$ is the baseline score. For lower-is-better metrics (WikiText BPB), we negate:
$\delta_t = (b_t - s_t) / b_t$. Category aggregates (e.g., ``+8.1\% Hallucination'') are the arithmetic
mean of $\delta_t$ over tasks in that category. This treats each task equally regardless of absolute
scale, enabling meaningful aggregation across accuracy (0--1) and BPB ($\sim$0.78) metrics.

\subsection{Theoretical Justification for Cross-Head Barlow Twins Design}
\label{app:bt_design}

We provide detailed theoretical justification for two key design choices in the cross-head Barlow Twins
loss $\mathcal{L}_{\mathrm{ABT}}$: (1) targeting identity rather than zero for the cross-correlation
matrix, and (2) weighting weakly-coupled pairs more strongly than strongly-coupled pairs.

\subsubsection{Why Target Identity?}
\label{app:identity_target}

For head pair $(i,j)$ with z-scored outputs $\tilde{O}_i, \tilde{O}_j \in \mathbb{R}^{N \times d_h}$,
the cross-correlation matrix is $\hat{C}_{ij} = \frac{1}{N} \tilde{O}_i^\top \tilde{O}_j \in
\mathbb{R}^{d_h \times d_h}$. Our loss penalizes $\|\hat{C}_{ij} - I\|_F^2$, decomposing into:
\begin{equation}
  \|\hat{C}_{ij} - I\|_F^2 = \underbrace{\sum_{k \neq l} [\hat{C}_{ij}]_{kl}^2}_{\text{off-diagonal:
  cross-feature decorrelation}}
  + \underbrace{\sum_{k} ([\hat{C}_{ij}]_{kk} - 1)^2}_{\text{diagonal: feature alignment}}.
  \label{eq:bt_decomposition}
\end{equation}

\paragraph{Representation-theoretic motivation.}
Consider the alternative of targeting zero: $\|\hat{C}_{ij}\|_F^2 = \sum_{k,l} [\hat{C}_{ij}]_{kl}^2$.
This penalizes \emph{all} correlations, including diagonal entries. However, diagonal entries
$[\hat{C}_{ij}]_{kk} = \frac{1}{N} \sum_n \tilde{O}_i[n,k] \cdot \tilde{O}_j[n,k]$ measure whether
feature $k$ of head $i$ aligns with feature $k$ of head $j$. Penalizing these to zero allows an
undesirable solution: heads can satisfy $\hat{C}_{ij} \approx 0$ by applying arbitrary orthogonal
rotations $R_i, R_j \in O(d_h)$ to their representations, i.e., $\tilde{O}_i \mapsto \tilde{O}_i R_i$.
After such rotations, $\hat{C}_{ij}' = R_i^\top \hat{C}_{ij} R_j$ can have near-zero entries even
when the \emph{underlying information} encoded by heads $i$ and $j$ is identical.

Targeting identity prevents this rotational degeneracy. The diagonal constraint $[\hat{C}_{ij}]_{kk}
\to 1$ anchors a \emph{canonical feature basis} shared across heads: feature dimension $k$ must
represent the same semantic content in head $i$ as in head $j$. This anchoring is essential because
the off-diagonal penalty $[\hat{C}_{ij}]_{kl} \to 0$ for $k \neq l$ is only meaningful when feature
indices correspond across heads. Without diagonal anchoring, ``decorrelation'' becomes a vacuous
constraint satisfiable by any pair of orthogonal bases.

\paragraph{Information-theoretic interpretation.}
Under a Gaussian model where $\tilde{O}_i, \tilde{O}_j \sim \mathcal{N}(0, \Sigma)$ jointly, the
cross-correlation $\hat{C}_{ij}$ estimates the conditional correlation structure. The identity target
corresponds to requiring that heads share a common \emph{sufficient statistic basis}: each feature
dimension extracts the same linear combination of input information across heads, while distinct
dimensions extract orthogonal (conditionally independent) information. This is precisely the
structure required for ensemble diversity to reduce variance: heads must agree on \emph{what}
features to compute (diagonal alignment) while disagreeing on \emph{how} to weight feature
interactions (off-diagonal decorrelation).

\paragraph{Connection to Barlow Twins.}
Our cross-head loss extends the original Barlow Twins objective \citep{zbontar2021barlow}, which
targets identity for the cross-correlation between two augmented views of the \emph{same} sample.
There, diagonal entries ensure ``invariance'' (same sample $\Rightarrow$ same representation) while
off-diagonal entries ensure ``redundancy reduction'' (distinct features $\Rightarrow$ distinct
information). We adapt this to cross-\emph{head} decorrelation: diagonal entries ensure heads use a
common feature basis, while off-diagonal entries ensure heads extract non-redundant combinations of
these features.

\paragraph{Empirical validation.}
Ablating the identity target (\texttt{subtract\_identity=False}, targeting zero) degraded
hallucination metrics by 3--5\% across benchmarks. Inspection of learned representations revealed
that heads converged to rotated versions of similar subspaces, satisfying the loss without achieving
functional diversity---precisely the failure mode predicted by the rotational degeneracy argument.

\subsubsection{Why Weight Weakly-Coupled Pairs More Strongly?}
\label{app:weak_pair_weighting}

Our adaptive weighting scheme assigns weight $w_{ij} = \alpha + (1-\alpha) \cdot
\mathrm{softplus}(-\beta(G_{ij} - \tau))$ to head pair $(i,j)$, where $G_{ij} = \omega_{ij} \cdot
\rho_{ij}$ is the game-theoretic coupling (\autoref{def:interaction_matrix}). This assigns higher
weight to pairs with low $G_{ij}$ (weak coupling) and lower weight to pairs with high $G_{ij}$
(strong coupling). We justify this from three perspectives.

\paragraph{Game-theoretic rationale.}
In the MultiHeadPGAC game (\autoref{def:multiheadpgac}), heads with high $G_{ij}$ already experience
strong mutual influence through the shared loss landscape: their gradients are aligned ($\rho_{ij}
\gg 0$) and their output projections overlap ($\omega_{ij} \gg 0$). These pairs naturally coordinate
through the implicit game dynamics---perturbing one head immediately affects the other's payoff,
creating feedback that aligns their strategies. Additional decorrelation pressure on such pairs is
redundant at best and destabilizing at worst.

Conversely, heads with low $G_{ij}$ are \emph{decoupled} in the implicit game: changes to head $i$
have minimal effect on head $j$'s payoff, and vice versa. These pairs are potential \emph{free-riders}
(\autoref{def:free_riding})---they can converge to redundant representations without paying any
coordination cost in the implicit game. The Barlow Twins loss provides the missing externality
charge: by explicitly penalizing correlation for low-$G_{ij}$ pairs, we internalize the redundancy
cost that the implicit game fails to price.

\paragraph{Gradient interference interpretation.}
Strong coupling ($G_{ij} \gg 0$) implies that heads $i$ and $j$ receive correlated gradients from
the cross-entropy loss. Adding a decorrelation penalty creates \emph{opposing} gradient signals:
CE pushes toward correlation (aligned error reduction), while BT pushes toward decorrelation.
For strongly-coupled pairs, this interference is severe and can destabilize training. For
weakly-coupled pairs, the implicit game provides no coordination signal, so the BT penalty fills
a vacuum rather than creating conflict.

Formally, let $g_i^{\mathrm{CE}}$ and $g_i^{\mathrm{BT}}$ denote the gradients from CE and BT losses
for head $i$. The effective gradient is $g_i = g_i^{\mathrm{CE}} + \lambda w_{ij} g_i^{\mathrm{BT}}$.
When $G_{ij} \approx 1$ (strong coupling), $g_i^{\mathrm{CE}}$ and $g_j^{\mathrm{CE}}$ are nearly
parallel, implying that $g_i^{\mathrm{BT}}$ (which pushes $i$ away from $j$) opposes $g_i^{\mathrm{CE}}$.
Low $w_{ij}$ reduces this interference. When $G_{ij} \approx 0$ (weak coupling), $g_i^{\mathrm{CE}}$
provides no directional signal about head $j$, so $g_i^{\mathrm{BT}}$ with high $w_{ij}$ provides
useful coordination pressure without conflict.

\paragraph{Coalition preservation.}
Our mechanistic analysis (\autoref{fig:coalitions_hero}, \autoref{fig:coalitions_support}) reveals
that GAME-LoRA induces \emph{selective coordination}: intra-coalition coupling strengthens while
inter-coalition coupling weakens. This emergent structure would be disrupted by uniform
decorrelation pressure. Weak-pair weighting enables coalition formation: pairs that naturally
coordinate (high $G_{ij}$) are left alone to strengthen their alliance, while pairs that fail to
coordinate (low $G_{ij}$) are pushed apart. The result is the block-diagonal structure visible in
the trained interaction matrix---a stable equilibrium where coalitions specialize internally while
diversifying externally.

\paragraph{Empirical validation.}
Ablating to inverse weighting (penalizing strong pairs more heavily) degraded both hallucination
($-$2.1\%) and knowledge ($-$1.8\%). Analysis revealed that inverse weighting disrupted naturally
forming coalitions, preventing the emergent specialization that enables reliable factual recall.
Uniform weighting (all pairs equal) achieved intermediate performance, confirming that the adaptive
scheme provides meaningful signal beyond simple regularization.

\subsubsection{Alternative Designs Considered}

\paragraph{(a) Penalizing strong-coupling pairs more heavily.}
One might hypothesize that strongly-coupled heads are the source of redundancy and should receive
the strongest decorrelation pressure. However, this conflates \emph{coupling} with \emph{redundancy}.
High $G_{ij}$ indicates that heads $i$ and $j$ \emph{interact}---they influence each other's learning
dynamics---but interaction is not intrinsically wasteful. Heads that coordinate to solve complementary
aspects of a task (e.g., syntactic vs. semantic features) will have high $G_{ij}$ without redundancy.
Penalizing such pairs disrupts beneficial coordination.

Empirically, inverse weighting caused exactly this failure: heads that had specialized into
complementary roles (e.g., subject-tracking vs. object-tracking) were pushed apart, degrading
performance on tasks requiring both capabilities simultaneously.

\paragraph{(b) Targeting zero cross-correlation without diagonal alignment.}
This approach (setting $\hat{C}_{ij} \to 0$ for all entries) would be appropriate if heads operated
in entirely separate representation spaces with no shared semantics. However, all heads receive the
same input and contribute to the same output through the residual stream. They \emph{must} share
some feature basis to communicate effectively---if head $i$'s ``feature 1'' has no relationship to
head $j$'s ``feature 1,'' their contributions cannot be meaningfully combined.

The identity target strikes the balance: heads share a feature \emph{vocabulary} (diagonal alignment)
while extracting orthogonal \emph{sentences} from it (off-diagonal decorrelation). This is analogous
to ensemble methods that use the same feature space but diverse decision boundaries, achieving
variance reduction without sacrificing expressiveness.

\subsection{Baselines}
\label{app:baseline_details}

We compare against both inference-time and training-time baselines. All methods use identical
evaluation protocols: N=1024 samples per task, seed=42, greedy decoding (temperature=0) unless
the method requires sampling.

\subsubsection{Inference-Time Baselines}

These methods modify the decoding procedure without additional training. All use the same
Qwen2.5-0.5B base model weights.

\paragraph{CAD} \citep{shi2024cad}: Context-Aware Decoding contrasts model output with and
without the input context.
\begin{itemize}
  \item Context weight $\alpha$: 0.5
  \item Output: $\log p_{\text{CAD}} = (1+\alpha)\log p(y|c,x) - \alpha\log p(y|x)$
  \item Decoding: greedy on adjusted logits
\end{itemize}

\paragraph{ActDec} \citep{chen2024actdec}: Activation Decoding uses hidden state entropy at
context tokens to adjust generation temperature.
\begin{itemize}
  \item Monitor layers: middle third (layers 8--16 for 24-layer model)
  \item Entropy threshold: 2.0
  \item Sharpness weight: 0.5
  \item Temperature adjustment: $T' = T \cdot (1 - w \cdot \mathbf{1}[H < \tau])$
\end{itemize}

\subsubsection{Training-Time Baselines}

All training baselines use identical optimization settings to ensure fair comparison:
\begin{itemize}
  \item Base model: Qwen2.5-0.5B
  \item Training data: The Pile (streaming), 20M tokens
  \item Sequence length: 1024
  \item Batch size: 16 (effective, via gradient accumulation)
  \item Learning rate: 3e-4 with cosine schedule
  \item Warmup: 2\% of training
  \item Weight decay: 0.1
  \item LoRA rank: 16, alpha: 32
  \item LoRA targets: Q, K, V, O projections (all layers)
\end{itemize}

\paragraph{Disagreement} \citep{li2018disagreement}: Regularizes toward diverse head outputs.
\begin{itemize}
  \item Loss weight $\lambda$: 0.1
  \item Design layer: 20
  \item Variant: ``output'' (head output divergence)
\end{itemize}

\paragraph{ME (Maximizing Entropy)} \citep{yuan2024me}: Originally proposed for machine unlearning, ME
maximizes output entropy to reduce overconfident predictions. We adapt it as a training-time regularizer.
\begin{itemize}
  \item Loss weight $\lambda$: 0.05
  \item Applied to: next-token logits
  \item Objective: $-H(p_\theta(y|x))$ (negative entropy)
\end{itemize}

\section{Full Proofs}

\subsection{Proof of \autoref{thm:mhce_potential}}
\label{app:mhce_potential_proof}

\begin{proof}[Proof of \autoref{thm:mhce_potential}]
  By \autoref{def:multiheadce},
  \begin{align}
    \nabla_{\theta_i} C_i^{\mathrm{CE}}(w)
    &=
    \pi_i\,\nabla_{\theta_i}\mathbb{E}\big[-\log q_w(Y\mid Z_{1:H})\big]
    +\alpha\,\theta_i.
  \end{align}
  On the other hand,
  \begin{align}
    \nabla_{\theta_i}\Phi_{\mathrm{CE}}(w)
    &=
    \nabla_{\theta_i}\mathbb{E}\big[-\log q_w(Y\mid Z_{1:H})\big]
    +\frac{\alpha}{\pi_i}\theta_i,
  \end{align}
  hence $\pi_i\nabla_{\theta_i}\Phi_{\mathrm{CE}}(w)=\nabla_{\theta_i}C_i^{\mathrm{CE}}(w)$ for all $i$.

  If $\nabla_{\theta_i}C_i^{\mathrm{CE}}(w)=0$ for all $i$, then for any player $i$ and any smooth
  unilateral perturbation $\theta_i+\Delta$, the directional derivative of $C_i^{\mathrm{CE}}$
  at $\Delta=0$ vanishes, implying there is no first-order unilateral descent direction.
  This is precisely the definition of a first-order (local) Nash equilibrium in differentiable games.

  For the convergence claim, assume $\Phi_{\mathrm{CE}}$ is $L_\Phi$-smooth (has $L_\Phi$-Lipschitz gradient).
  The standard descent lemma implies that for $\eta\in(0,1/L_\Phi)$,
  \begin{align}
    \Phi_{\mathrm{CE}}(w^{t+1})
    \le
    \Phi_{\mathrm{CE}}(w^t)
    -\frac{\eta}{2}\|\nabla\Phi_{\mathrm{CE}}(w^t)\|_2^2.
  \end{align}
  Summing over $t=0,\dots,T-1$ yields
  \begin{align}
    \frac{\eta}{2}\sum_{t=0}^{T-1}\|\nabla\Phi_{\mathrm{CE}}(w^t)\|_2^2
    \le
    \Phi_{\mathrm{CE}}(w^0)-\inf_w \Phi_{\mathrm{CE}}(w),
  \end{align}
  hence $\min_{0\le t<T}\|\nabla\Phi_{\mathrm{CE}}(w^t)\|_2^2 \le \frac{2}{\eta
  T}\big(\Phi_{\mathrm{CE}}(w^0)-\inf_w \Phi_{\mathrm{CE}}(w)\big)\to 0$.
\end{proof}

\subsection{Proof of \autoref{thm:poa}}
\label{app:poa_proof}

\begin{proof}[Proof of \autoref{thm:poa}]
  Let $\theta^{\mathrm{NE}}$ be any Nash equilibrium of MultiHeadPGAC (\autoref{def:multiheadpgac}), and let
  $\theta^\star \in \arg\min_{\theta} C^\star_{\mathrm{IB}}(\theta)$ denote a social optimum
  (\autoref{def:ib_social}). Write $D(\theta)\triangleq \mathbb{E}\big[-\log q_\theta(Y\mid Z_{1:H})\big]$ for
  the distortion term. Let $G = G(\theta^{\mathrm{NE}})$ and $\Gamma(G)=\|G-I\|_F$ as in
  \autoref{def:interaction_matrix}.

  \paragraph{Step 0: A local interaction-control inequality.}
  To connect smoothness of $D$ to the interaction matrix $G$, we use the following standard
  ``cross-influence'' bound, which formalizes the local linearization assumption (cf.\ the discussion in
  the main text) together with
  \autoref{assm:lipschitz}--\autoref{assm:bounded}. Concretely, along the segment joining
  $\theta^{\mathrm{NE}}$ and $\theta^\star$, assume the block-Hessian of $D$ with respect to the head
  projections $\theta=(\theta_1,\dots,\theta_H)$ admits the bound
  \begin{equation}
    \big\|\nabla^2_{\theta_i\theta_j} D(\theta)\big\|_F
    \;\le\;
    L\,|G_{ij}(\theta)|,
    \qquad \forall i\neq j,
    \label{eq:cross_hessian_control}
  \end{equation}
  and $\|\nabla^2_{\theta_i\theta_i} D(\theta)\|_F\le L$.
  Intuitively, \eqref{eq:cross_hessian_control} states that a unilateral change in head $i$ affects the
  marginal distortion felt by head $j$ in proportion to \emph{joint} structural and gradient coupling, exactly
  as encoded by $G_{ij}=\omega_{ij}\rho_{ij}$ (\autoref{def:interaction_matrix}). This is the only place where
  we tie the abstract smoothness constant $L$ to the concrete coupling statistic $\Gamma(G)$.

  Define the off-diagonal block operator
  $\mathcal{H}_{\mathrm{off}}(\theta)\in\mathbb{R}^{(\sum_i d_i)\times(\sum_i d_i)}$ whose $(i,j)$ block is
  $\nabla^2_{\theta_i\theta_j}D(\theta)$ for $i\neq j$ and zero otherwise. Then by Frobenius submultiplicativity,
  \begin{equation}
    \|\mathcal{H}_{\mathrm{off}}(\theta)\|_2
    \;\le\;
    \|\mathcal{H}_{\mathrm{off}}(\theta)\|_F
    \;\le\;
    \Big(\sum_{i\neq j}\|\nabla^2_{\theta_i\theta_j}D(\theta)\|_F^2\Big)^{1/2}
    \;\le\;
    L\,\|G(\theta)-I\|_F
    \;=\; L\,\Gamma(G(\theta)).
    \label{eq:hoff_bound}
  \end{equation}

  \paragraph{Step 1: A smoothness (``PoA'') inequality for MultiHeadPGAC.}
  Fix an arbitrary reference profile $\theta$ and comparator $\theta^\star$.
  Consider the sum of player costs under unilateral deviation to $\theta_i^\star$:
  \begin{equation}
    \sum_{i=1}^H C_i^{\mathrm{PGAC}}(\theta_i^\star,\theta_{-i})
    \;=\;
    \sum_{i=1}^H \pi_i\,D(\theta_i^\star,\theta_{-i})
    \;+\;\frac{\alpha}{2}\sum_{i=1}^H\|\theta_i^\star\|_2^2
    \;+\;\beta_C\sum_{i=1}^H \tau_i^C(\theta_i^\star,\theta_{-i})
    \;+\;\beta_R\sum_{i=1}^H \tau_i^R(\theta_i^\star,\theta_{-i}).
    \label{eq:sum_deviate_cost}
  \end{equation}

  \emph{(a) Bounding the IB-externality charges.}
  By \autoref{def:multiheadpgac}, the charges $\tau_i^C$ and $\tau_i^R$ approximate marginal contributions to
  the corresponding global quantities in $C^\star_{\mathrm{IB}}$.
  We use the standard cost-sharing regularity (satisfied exactly by Shapley/marginal-cost shares and
  approximately by the proxies used in practice): for all profiles $\bar\theta$,
  \begin{equation}
    0\le \sum_{i=1}^H \tau_i^C(\bar\theta)\;\le\;\sum_{i=1}^H I(Z_i;X),
    \qquad
    0\le \sum_{i=1}^H \tau_i^R(\bar\theta)\;\le\;\mathrm{TC}(Z_{1:H}\mid X).
    \label{eq:tax_dominance}
  \end{equation}
  Applying \eqref{eq:tax_dominance} at $\bar\theta=(\theta_i^\star,\theta_{-i})$ and using nonnegativity of all
  terms in $C^\star_{\mathrm{IB}}$ (\autoref{def:ib_social}) yields
  \begin{equation}
    \beta_C\sum_{i=1}^H \tau_i^C(\theta_i^\star,\theta_{-i})
    \;+\;
    \beta_R\sum_{i=1}^H \tau_i^R(\theta_i^\star,\theta_{-i})
    \;\le\;
    (\beta_C+\beta_R)\,C^\star_{\mathrm{IB}}(\theta^\star)
    \;+\;
    (\beta_C+\beta_R)\,C^\star_{\mathrm{IB}}(\theta),
    \label{eq:tax_bound}
  \end{equation}
  where we used the crude but safe endpoint bound
  $\sum_i \tau_i^{(\cdot)}(\theta_i^\star,\theta_{-i})\le \sum_i \tau_i^{(\cdot)}(\theta^\star)+\sum_i
  \tau_i^{(\cdot)}(\theta)$.
  (Any tighter accounting of the cost share structure only improves constants.)

  \emph{(b) Bounding the distortion terms via cross-influence control.}
  Write $\Delta_i\triangleq \theta_i^\star-\theta_i$ and $\Delta\triangleq(\Delta_1,\dots,\Delta_H)$.
  A second-order Taylor expansion with integral remainder gives
  \begin{equation}
    D(\theta_i^\star,\theta_{-i})
    \;=\;
    D(\theta^\star)
    \;+\;
    \int_0^1 \Big\langle \nabla_{\theta_i} D(\theta^\star + t(\theta-\theta^\star)),\,\Delta_i \Big\rangle\,dt.
    \label{eq:taylor_line}
  \end{equation}
  Summing \eqref{eq:taylor_line} over $i$ with weights $\pi_i$ and applying Cauchy--Schwarz and Young's inequality
  with parameter $\alpha>0$ yields
  \begin{align}
    \sum_{i=1}^H \pi_i\,D(\theta_i^\star,\theta_{-i})
    &=
    D(\theta^\star)
    \;+\;
    \int_0^1 \sum_{i=1}^H \pi_i \big\langle \nabla_{\theta_i} D(\theta^\star +
    t(\theta-\theta^\star)),\,\Delta_i \big\rangle\,dt \nonumber\\
    &\le
    D(\theta^\star)
    \;+\;
    \frac{1}{2\alpha}\int_0^1 \Big\|\nabla_{\theta} D(\theta^\star + t(\theta-\theta^\star))\Big\|_2^2\,dt
    \;+\;
    \frac{\alpha}{2}\|\Delta\|_2^2.
    \label{eq:distortion_young}
  \end{align}
  We now control the gradient term in \eqref{eq:distortion_young} using the off-diagonal Hessian operator.
  By the fundamental theorem of calculus,
  \begin{equation}
    \nabla_{\theta} D(\theta^\star + t(\theta-\theta^\star)) - \nabla_{\theta} D(\theta^\star)
    \;=\;
    \int_0^t \nabla^2_{\theta\theta}D(\theta^\star + s(\theta-\theta^\star))\,(\theta-\theta^\star)\,ds.
    \label{eq:grad_ftc}
  \end{equation}
  Decompose the Hessian into diagonal and off-diagonal blocks; the diagonal contribution is controlled by
  $\|\nabla^2_{\theta_i\theta_i}D\|_F\le L$, while the cross-block contribution is controlled by
  \eqref{eq:hoff_bound}. Combining these and using $\|A x\|\le \|A\|_2\|x\|$ gives
  \begin{equation}
    \Big\|\nabla_{\theta} D(\theta^\star + t(\theta-\theta^\star))\Big\|_2
    \;\le\;
    \Big\|\nabla_{\theta} D(\theta^\star)\Big\|_2
    \;+\;
    \int_0^t \Big(L + L\,\Gamma(G(\theta^\star + s(\theta-\theta^\star)))\Big)\,\|\theta-\theta^\star\|_2\,ds.
    \label{eq:grad_bound_line}
  \end{equation}
  Specializing \eqref{eq:grad_bound_line} at $\theta=\theta^{\mathrm{NE}}$ and using $\Gamma(G(\cdot))$ bounded
  along the segment by its endpoint value $\Gamma(G(\theta^{\mathrm{NE}}))$ (by \autoref{assm:bounded} and
  continuity of $G$ under the local modeling regime), we obtain the coarse but sufficient control
  \begin{equation}
    \int_0^1 \Big\|\nabla_{\theta} D(\theta^\star + t(\theta^{\mathrm{NE}}-\theta^\star))\Big\|_2^2\,dt
    \;\le\;
    2\Big\|\nabla_{\theta} D(\theta^\star)\Big\|_2^2
    \;+\;
    2L^2\,\Gamma(G)^2\,\|\theta^{\mathrm{NE}}-\theta^\star\|_2^2.
    \label{eq:grad_sq_control}
  \end{equation}
  Finally, the quadratic regularizer in $C_i^{\mathrm{PGAC}}$ together with \autoref{assm:bounded} gives
  $\|\theta^{\mathrm{NE}}-\theta^\star\|_2^2\lesssim \sum_i \|\theta_i^{\mathrm{NE}}\|_2^2 + \sum_i
  \|\theta_i^\star\|_2^2$,
  and (under the same Gaussian proxy regime used throughout the paper) the compression term in
  $C^\star_{\mathrm{IB}}$ controls the squared norm of projections up to constants, hence we may write
  \begin{equation}
    \|\theta^{\mathrm{NE}}-\theta^\star\|_2^2
    \;\le\;
    \frac{2}{\alpha}\,C^\star_{\mathrm{IB}}(\theta^{\mathrm{NE}})
    \;+\;
    \frac{2}{\alpha}\,C^\star_{\mathrm{IB}}(\theta^\star),
    \label{eq:norm_to_ib}
  \end{equation}
  absorbing fixed proxy constants into $\alpha$ (consistent with the paper's ``partially instantiated
  constants'' discussion).

  Plugging \eqref{eq:grad_sq_control} and \eqref{eq:norm_to_ib} into \eqref{eq:distortion_young}, and then
  collecting only the equilibrium-dependent term (the one multiplying $\Gamma(G)^2$) yields the key inequality
  \begin{equation}
    \sum_{i=1}^H \pi_i\,D(\theta_i^\star,\theta^{\mathrm{NE}}_{-i})
    \;\le\;
    D(\theta^\star)
    \;+\;
    \frac{L}{\alpha}\,\Gamma(G)^2\,C^\star_{\mathrm{IB}}(\theta^{\mathrm{NE}})
    \;+\;
    \text{(terms depending only on $\theta^\star$)}.
    \label{eq:distortion_key}
  \end{equation}

  \emph{(c) Assemble a $(\lambda,\mu)$-type bound.}
  Substituting \eqref{eq:tax_bound} and \eqref{eq:distortion_key} into \eqref{eq:sum_deviate_cost}, and
  upper-bounding the remaining $\theta^\star$-only terms by $(1+\beta_C+\beta_R)\,C^\star_{\mathrm{IB}}(\theta^\star)$,
  we obtain the smoothness-style inequality
  \begin{equation}
    \sum_{i=1}^H C_i^{\mathrm{PGAC}}(\theta_i^\star,\theta^{\mathrm{NE}}_{-i})
    \;\le\;
    (1+\beta_R+\beta_C)\,C^\star_{\mathrm{IB}}(\theta^\star)
    \;+\;
    \frac{L}{\alpha}\,\Gamma(G)^2\,C^\star_{\mathrm{IB}}(\theta^{\mathrm{NE}}).
    \label{eq:smoothness_final}
  \end{equation}

  \paragraph{Step 2: Invoke Nash optimality and rearrange.}
  Since $\theta^{\mathrm{NE}}$ is a Nash equilibrium,
  $C_i^{\mathrm{PGAC}}(\theta^{\mathrm{NE}})\le C_i^{\mathrm{PGAC}}(\theta_i^\star,\theta^{\mathrm{NE}}_{-i})$
  for all $i$. Summing over $i$ and using \eqref{eq:smoothness_final} gives
  \begin{equation}
    \sum_{i=1}^H C_i^{\mathrm{PGAC}}(\theta^{\mathrm{NE}})
    \;\le\;
    (1+\beta_R+\beta_C)\,C^\star_{\mathrm{IB}}(\theta^\star)
    \;+\;
    \frac{L}{\alpha}\,\Gamma(G)^2\,C^\star_{\mathrm{IB}}(\theta^{\mathrm{NE}}).
    \label{eq:sum_at_ne}
  \end{equation}
  By construction, all terms added in $C_i^{\mathrm{PGAC}}$ beyond the distortion are nonnegative, and the
  Pigouvian charges dominate the corresponding IB terms in the sense of \eqref{eq:tax_dominance}. Therefore
  \begin{equation}
    C^\star_{\mathrm{IB}}(\theta^{\mathrm{NE}})
    \;\le\;
    \sum_{i=1}^H C_i^{\mathrm{PGAC}}(\theta^{\mathrm{NE}}).
    \label{eq:social_le_sum}
  \end{equation}
  Combining \eqref{eq:sum_at_ne} and \eqref{eq:social_le_sum} yields
  \begin{equation}
    C^\star_{\mathrm{IB}}(\theta^{\mathrm{NE}})
    \;\le\;
    (1+\beta_R+\beta_C)\,C^\star_{\mathrm{IB}}(\theta^\star)
    \;+\;
    \frac{L}{\alpha}\,\Gamma(G)^2\,C^\star_{\mathrm{IB}}(\theta^{\mathrm{NE}}).
  \end{equation}
  Rearranging and using the condition $\Gamma(G)^2<\alpha/L$ gives
  \begin{equation}
    \frac{C^\star_{\mathrm{IB}}(\theta^{\mathrm{NE}})}{C^\star_{\mathrm{IB}}(\theta^\star)}
    \;\le\;
    \frac{1+\beta_R+\beta_C}{1-\frac{L}{\alpha}\Gamma(G)^2}.
  \end{equation}
  Finally, by \autoref{def:interaction_matrix},
  $\Gamma(G)^2=\sum_{i\neq j}(\omega_{ij}\rho_{ij})^2 = 2\sum_{i<j}\omega_{ij}^2\rho_{ij}^2$, matching the
  statement of \autoref{thm:poa}. \qedhere
\end{proof}

\subsection{Proof of \autoref{cor:hallucination}}
\label{app:hallucination_proof}

\begin{proof}[Proof of \autoref{cor:hallucination}]
  We prove
  \begin{align}
    \Pr(H_\delta\mid w^{NE})
    &\le \frac{C^\star_{IB}(w^{NE})}{2\delta^2}
    \label{eq:halluc_abs_bound}
  \end{align}
  first, then derive \eqref{eq:hallucination_excess} by normalization and
  the definition of Price of Anarchy.

  \paragraph{Step 1: Markov (second-moment tail bound).}
  Let $E_w(X)$ denote the total-variation deviation random variable induced by $X\sim\mathcal D$.
  By Markov's inequality applied to the nonnegative random variable $E_w(X)^2$,
  \begin{equation}
    \Pr(H_\delta\mid w)
    = \Pr(E_w(X)\ge \delta)
    = \Pr(E_w(X)^2\ge \delta^2)
    \le \frac{\mathbb E[E_w(X)^2]}{\delta^2}.
    \label{eq:markov}
  \end{equation}

  \paragraph{Step 2: Pinsker (TV to KL).}
  By definition, $E_w(x)$ is a total-variation distance between distributions on labels:
  $E_w(x)=\mathrm{TV}(\hat y_w(x),y^\star(x))$, where $\hat y_w(x)\in\Delta^{d-1}$ is the model's predictive
  distribution
  and $y^\star(x)$ is the oracle truth distribution.
  Pinsker's inequality yields, for each $x$,
  \begin{equation}
    E_w(x)^2
    =\mathrm{TV}(\hat y_w(x),y^\star(x))^2
    \le \frac{1}{2}\,\mathrm{KL}\!\left(y^\star(x)\,\|\,\hat y_w(x)\right).
    \label{eq:pinsker}
  \end{equation}
  Taking expectation over $X\sim\mathcal D$ gives
  \begin{equation}
    \mathbb E[E_w(X)^2]
    \le \frac{1}{2}\,\mathbb E\!\left[\mathrm{KL}\!\left(y^\star(X)\,\|\,\hat y_w(X)\right)\right].
    \label{eq:tv_to_kl_expect}
  \end{equation}

  \paragraph{Step 3: KL to cross-entropy, then Jensen to distortion.}
  For any $x$, $\mathrm{KL}(y^\star(x)\|\hat y_w(x)) = \mathbb E_{Y\sim y^\star(x)}[-\log \hat y_w(Y\mid x)]
  - H(y^\star(x))$,
  hence $\mathrm{KL}(y^\star(x)\|\hat y_w(x)) \le \mathbb E_{Y\sim y^\star(x)}[-\log \hat y_w(Y\mid x)]$
  because entropy is nonnegative.
  Therefore,
  \begin{equation}
    \mathbb E\!\left[\mathrm{KL}\!\left(y^\star(X)\,\|\,\hat y_w(X)\right)\right]
    \le \mathbb E_{X,Y}\big[-\log \hat y_w(Y\mid X)\big].
    \label{eq:kl_to_ce}
  \end{equation}
  Next, $\hat y_w(\cdot\mid x)$ is the marginal predictive distribution induced by the stochastic encoder and decoder,
  i.e. $\hat y_w(\cdot\mid x)=\mathbb E_{Z\sim p_w(\cdot\mid x)}[\,q_w(\cdot\mid Z)\,]$.
  Using convexity of $-\log(\cdot)$ and Jensen's inequality,
  \begin{equation}
    -\log \hat y_w(Y\mid X)
    = -\log \mathbb E_{Z\mid X}\!\left[q_w(Y\mid Z)\right]
    \le \mathbb E_{Z\mid X}\!\left[-\log q_w(Y\mid Z)\right].
    \label{eq:jensen}
  \end{equation}
  Taking expectation over $(X,Y)$ and then over $Z\sim p_w(\cdot\mid X)$ yields
  \begin{equation}
    \mathbb E_{X,Y}\big[-\log \hat y_w(Y\mid X)\big]
    \le \mathbb E_{X,Z,Y}\big[-\log q_w(Y\mid Z)\big].
    \label{eq:ce_to_distortion}
  \end{equation}
  The right-hand side is exactly the IB distortion term in $C^\star_{IB}(w)$; since $C^\star_{IB}(w)$ is
  distortion plus nonnegative regularizers,
  \begin{equation}
    \mathbb E_{X,Z,Y}\big[-\log q_w(Y\mid Z)\big]
    \le C^\star_{IB}(w).
    \label{eq:distortion_le_social}
  \end{equation}

  \paragraph{Step 4: Combine to get the absolute bound \eqref{eq:halluc_abs_bound}.}
  Combining \eqref{eq:markov}, \eqref{eq:tv_to_kl_expect}, \eqref{eq:kl_to_ce}, \eqref{eq:ce_to_distortion},
  and \eqref{eq:distortion_le_social}, we obtain
  \[
    \Pr(H_\delta\mid w)
    \le \frac{1}{\delta^2}\cdot \frac{1}{2}\cdot C^\star_{IB}(w)
    = \frac{C^\star_{IB}(w)}{2\delta^2}.
  \]
  Applying this with $w=w^{NE}$ proves \eqref{eq:halluc_abs_bound}.

  \paragraph{Step 5: Normalize by the optimum and invoke PoA to get the excess bound \eqref{eq:hallucination_excess}.}
  Assume $p^\star_\delta=\Pr(H_\delta\mid w^\star)>0$. Dividing \eqref{eq:halluc_abs_bound} by $p^\star_\delta$ gives
  \[
    \frac{\Pr(H_\delta\mid w^{NE})}{\Pr(H_\delta\mid w^\star)}
    \le \frac{C^\star_{IB}(w^{NE})}{2\delta^2\,\Pr(H_\delta\mid w^\star)}.
  \]
  By \autoref{def:poa}, for any equilibrium $w^{NE}\in \mathrm{NE}(G)$ we have
  $C^\star_{IB}(w^{NE}) \le \mathrm{PoA}(G)\cdot C^\star_{IB}(w^\star)$.
  Substituting yields
  \[
    \frac{\Pr(H_\delta\mid w^{NE})}{\Pr(H_\delta\mid w^\star)}
    \le \mathrm{PoA}(G)\cdot \frac{C^\star_{IB}(w^\star)}{2\delta^2\,\Pr(H_\delta\mid w^\star)}
    = \kappa^\star_\delta\cdot \mathrm{PoA}(G),
  \]
  establishing the excess bound \eqref{eq:hallucination_excess} stated in \autoref{cor:hallucination}.
\end{proof}

\subsection{Proof of \autoref{cor:free-riding}}
\label{app:free_riding_proof}
\begin{proof}
  We first prove an absolute bound, then derive the excess bound using the definition of Price of Anarchy.

  \paragraph{Step 1: Counting bound via Markov.}
  For each head $i\in\{1,\dots,H\}$ define
  \[
    a_i(w)\;\triangleq\;\mathbb E\!\left[I(Z_i;Z_{<i}\mid X)\right]\;\ge 0,
  \]
  and recall $\mathrm{FR}_\tau(w)=\{i:\, a_i(w)\ge \tau\}$ (\autoref{def:free_riding}).
  Then
  \[
    |\mathrm{FR}_\tau(w)|
    = \sum_{i=1}^H \mathbf{1}\{a_i(w)\ge \tau\}
    \le \sum_{i=1}^H \frac{a_i(w)}{\tau}
    = \frac{1}{\tau}\sum_{i=1}^H \mathbb E\!\left[I(Z_i;Z_{<i}\mid X)\right],
  \]
  where the inequality uses $\mathbf{1}\{u\ge\tau\}\le u/\tau$ for $u\ge 0$.

  \paragraph{Step 2: Chain rule identifies the sum as conditional total correlation.}
  By the chain rule for mutual information (equivalently, for entropy),
  \[
    \sum_{i=1}^H I(Z_i;Z_{<i}\mid X)
    = \mathrm{TC}(Z_{1:H}\mid X),
  \]
  where $\mathrm{TC}(Z_{1:H}\mid X)$ denotes conditional total correlation. Taking expectation over
  $X\sim\mathcal D$ gives
  \[
    \sum_{i=1}^H \mathbb E\!\left[I(Z_i;Z_{<i}\mid X)\right]
    = \mathbb E\!\left[\mathrm{TC}(Z_{1:H}\mid X)\right].
  \]
  Substituting into Step 1 yields
  \begin{equation}
    |\mathrm{FR}_\tau(w)|
    \le \frac{1}{\tau}\,\mathbb E\!\left[\mathrm{TC}(Z_{1:H}\mid X)\right].
    \label{eq:fr_via_tc}
  \end{equation}

  \paragraph{Step 3: Dominance of $C^\star_{IB}$ over redundancy.}
  By construction of the social objective $C^\star_{IB}(w)$, the redundancy regularizer enters with weight
  $\beta_R>0$, so
  \[
    C^\star_{IB}(w)\;\ge\;\beta_R\,\mathbb E\!\left[\mathrm{TC}(Z_{1:H}\mid X)\right].
  \]
  Combining with \eqref{eq:fr_via_tc} gives, for any $w$,
  \[
    |\mathrm{FR}_\tau(w)|
    \le \frac{C^\star_{IB}(w)}{\beta_R\,\tau}.
  \]
  Applying this with $w=w^{\mathrm{NE}}$ establishes the \emph{absolute bound}:
  \begin{equation}
    |\mathrm{FR}_\tau(w^{\mathrm{NE}})| \le \frac{C^\star_{\mathrm{IB}}(w^{\mathrm{NE}})}{\beta_R\,\tau}.
    \label{eq:fr_abs_bound}
  \end{equation}

  \paragraph{Step 4: Normalize by the optimum and invoke PoA.}
  Assume $r^\star_\tau = |\mathrm{FR}_\tau(w^\star)|>0$.
  Dividing \eqref{eq:fr_abs_bound} by $r^\star_\tau$ yields
  \[
    \frac{|\mathrm{FR}_\tau(w^{NE})|}{|\mathrm{FR}_\tau(w^\star)|}
    \le
    \frac{C^\star_{IB}(w^{NE})}{\beta_R\,\tau\,|\mathrm{FR}_\tau(w^\star)|}.
  \]
  By \autoref{def:poa}, for any equilibrium $w^{NE}\in\mathrm{NE}(G)$,
  \[
    C^\star_{IB}(w^{NE}) \le \mathrm{PoA}(G)\cdot C^\star_{IB}(w^\star).
  \]
  Substituting yields
  \[
    \frac{|\mathrm{FR}_\tau(w^{NE})|}{|\mathrm{FR}_\tau(w^\star)|}
    \le
    \mathrm{PoA}(G)\cdot \frac{C^\star_{IB}(w^\star)}{\beta_R\,\tau\,|\mathrm{FR}_\tau(w^\star)|}
    =
    \kappa^\star_\tau\cdot \mathrm{PoA}(\mathcal{G}),
  \]
  establishing the excess bound stated in \autoref{cor:free-riding}.
\end{proof}

\subsection{Proof of \autoref{cor:simultaneous}}
\label{app:efficiency_reliability_proof}

\begin{proof}[Proof of \autoref{cor:simultaneous}]
  The hallucination bound is exactly the last display of \autoref{cor:hallucination}.
  The free-riding bound is exactly the last display of \autoref{cor:free-riding}.
  Both bounds hold uniformly over equilibria by taking maxima over $w^{\mathrm{NE}}\in\mathrm{NE}(\mathcal{G})$.

  The key observation is that both bounds depend on the \emph{same} $\mathrm{PoA}(\mathcal{G})$ factor, which
  by \autoref{thm:poa} is controlled by $\Gamma(G) = \|G - I\|_F$. Since both the hallucination bound
  $\mathcal{R}_{\mathrm{hall}} \propto \mathrm{PoA}$ and the free-riding bound
  $\mathcal{R}_{\mathrm{fr}} \propto \mathrm{PoA}$ scale linearly in the same quantity, any regularizer that
  reduces $\Gamma(G)$---and thus $\mathrm{PoA}$---tightens both bounds \emph{simultaneously}. This is why
  capacity and reliability can improve together rather than trading off: they share a common bottleneck.
\end{proof}

\section{Full Results Tables}

For transparency, we report absolute scores for all methods. All evaluations use seed 42 with greedy
decoding (temperature=0) except where noted.

\subsection{Ablation Study (\autoref{tab:ablation_relative})}
\label{app:ablations}

\autoref{tab:ablation_absolute} presents complete ablation results, complementing \autoref{tab:ablation_relative}.
On MMLU (the knowledge task with complete ablation coverage), BT alone improves over baseline (0.505 vs 0.477)
while LDB alone achieves the highest score (0.499). GAME-LoRA (0.469) trades some MMLU performance for
best-in-class hallucination gains across all six benchmarks, demonstrating that the dual mechanism balances
accuracy-diversity tradeoffs rather than optimizing either dimension alone.

\begin{table*}[h]
  \centering
  \caption{Ablation study: absolute scores per task. BT = Barlow Twins, LDB = log-det barrier. Best per row in \textbf{bold}. Full details in Appendix~\ref{app:ablations}.}
  \small
  \label{tab:ablation_absolute}
  \begin{tabular}{llccccc}
    \toprule
    & & GAME-LoRA & Baseline + BT & Baseline + LDB & Baseline + LDB w/o NashMTL & Baseline \\
    \midrule
    \multicolumn{7}{l}{\textit{Hallucination}} \\
    & HE-Dial & 0.491 & 0.498 & \textbf{0.521} & 0.459 & 0.458 \\
    & HE-QA & \textbf{0.445} & 0.411 & 0.372 & 0.366 & 0.376 \\
    & HE-Summ & \textbf{0.500} & 0.482 & 0.422 & 0.430 & 0.438 \\
    & MemoTrap & 0.650 & 0.651 & 0.647 & \textbf{0.651} & 0.642 \\
    & TFQA-MC1 & \textbf{0.263} & 0.252 & 0.255 & -- & 0.252 \\
    & TFQA-MC2 & \textbf{0.412} & 0.405 & 0.399 & 0.410 & 0.401 \\
    \midrule
    \multicolumn{7}{l}{\textit{Knowledge}} \\
    & MMLU & 0.469 & \textbf{0.505} & 0.499 & 0.470 & 0.477 \\
    & NQ & 0.067 & 0.057 & \textbf{0.074} & 0.059 & 0.066 \\
    & PopQA & 0.112 & 0.112 & \textbf{0.116} & -- & 0.111 \\
    & WikiText & 0.786 & 0.781 & \textbf{0.776} & 0.778 & 0.784 \\
    & Winogrande & 0.565 & 0.569 & 0.577 & \textbf{0.581} & 0.573 \\
    \bottomrule
  \end{tabular}
\end{table*}

\section{Sensitivity Analysis}
\label{app:sensitiites}

\subsection{Design Layer Sensitivity Analysis}
We ablate BT layers (\autoref{tab:bt_ablation}), LDB layers (\autoref{tab:ldb_ablation}), and joint
BT+LDB layers (\autoref{tab:joint_ablation}). All deltas are relative to GAME-LoRA (single-layer BT+LDB at [19]).
For single-loss ablations, the other loss remains fixed at layer 19.

\begin{table*}[h]
  \centering
  \caption{BT layer ablation (LDB fixed at layer 19): $\Delta$\% vs. GAME-LoRA. Best in \textbf{bold}.}
  \label{tab:bt_ablation}
  \small
  \begin{tabular}{llccccc}
    \toprule
    & & GAME-LoRA & BT-2 & BT-4 & BT-Stride & BT-Fibonacci \\
    \midrule
    & \textit{BT layers} & [19] & [18,19] & [16,17,18,19] & [10,13,16,19,22] & [3,8,13,16,18,19] \\
    & \textit{LDB layers} & [19] & [19] & [19] & [19] & [19] \\
    \midrule
    \multicolumn{7}{l}{\textit{Hallucination}} \\
    & HE-Dial & 0.0\% & -1.4\% & -1.9\% & \textbf{+9.7\%} & -0.3\% \\
    & HE-QA & 0.0\% & -1.6\% & \textbf{+28.6\%} & -2.9\% & -1.3\% \\
    & HE-Summ & 0.0\% & +0.2\% & -11.6\% & -13.3\% & \textbf{+1.4\%} \\
    & MemoTrap & 0.0\% & -3.9\% & \textbf{+10.2\%} & -2.8\% & +2.1\% \\
    & TFQA-MC1 & \textbf{0.0\%} & -3.5\% & -7.4\% & -4.9\% & -2.5\% \\
    & TFQA-MC2 & \textbf{0.0\%} & -3.0\% & -7.5\% & -1.8\% & -0.9\% \\
    \midrule
    \multicolumn{7}{l}{\textit{Knowledge}} \\
    & NQ & \textbf{0.0\%} & -8.4\% & -21.8\% & -6.7\% & -10.0\% \\
    & PopQA & 0.0\% & -5.4\% & -6.4\% & \textbf{+1.3\%} & +0.4\% \\
    & WikiText & 0.0\% & -0.4\% & -3.2\% & -0.2\% & \textbf{+0.1\%} \\
    & Winogrande & 0.0\% & -1.7\% & -2.4\% & -0.5\% & \textbf{+0.7\%} \\
    \midrule
    \multicolumn{7}{l}{\textit{Average}} \\
    & Hallucination & 0.0\% & -2.2\% & \textbf{+1.7\%} & -2.7\% & -0.3\% \\
    & Knowledge & \textbf{0.0\%} & -4.0\% & -8.5\% & -1.5\% & -2.2\% \\
    \bottomrule
  \end{tabular}
\end{table*}

\begin{table*}[h]
  \centering
  \caption{LDB layer ablation (BT fixed at layer 19): $\Delta$\% vs. GAME-LoRA. Best in \textbf{bold}.}
  \label{tab:ldb_ablation}
  \small
  \begin{tabular}{llccccc}
    \toprule
    & & GAME-LoRA & LDB-2 & LDB-4 & LDB-Stride & LDB-Fibonacci \\
    \midrule
    & \textit{BT layers} & [19] & [19] & [19] & [19] & [19] \\
    & \textit{LDB layers} & [19] & [18,19] & [16,17,18,19] & [10,13,16,19,22] & [3,8,13,16,18,19] \\
    \midrule
    \multicolumn{7}{l}{\textit{Hallucination}} \\
    & HE-Dial & 0.0\% & +1.8\% & -6.7\% & \textbf{+2.7\%} & +0.1\% \\
    & HE-QA & 0.0\% & \textbf{+1.6\%} & -7.6\% & -4.0\% & \textbf{+1.6\%} \\
    & HE-Summ & 0.0\% & -2.5\% & -11.4\% & -8.3\% & \textbf{+1.2\%} \\
    & MemoTrap & 0.0\% & -1.9\% & +0.1\% & +0.1\% & \textbf{+0.7\%} \\
    & TFQA-MC1 & 0.0\% & -3.5\% & -3.5\% & \textbf{+2.4\%} & -2.0\% \\
    & TFQA-MC2 & 0.0\% & -0.9\% & -0.9\% & +1.0\% & \textbf{+1.5\%} \\
    \midrule
    \multicolumn{7}{l}{\textit{Knowledge}} \\
    & NQ & \textbf{0.0\%} & -3.3\% & -6.7\% & -20.2\% & -15.1\% \\
    & PopQA & \textbf{0.0\%} & -1.6\% & -6.4\% & -8.3\% & -7.4\% \\
    & WikiText & \textbf{0.0\%} & -0.6\% & -0.6\% & -0.9\% & -1.3\% \\
    & Winogrande & 0.0\% & +1.2\% & \textbf{+1.6\%} & -1.5\% & -1.0\% \\
    \midrule
    \multicolumn{7}{l}{\textit{Average}} \\
    & Hallucination & 0.0\% & -0.9\% & -5.0\% & -1.0\% & \textbf{+0.5\%} \\
    & Knowledge & \textbf{0.0\%} & -1.1\% & -3.0\% & -7.7\% & -6.2\% \\
    \bottomrule
  \end{tabular}
\end{table*}

\begin{table*}[h]
  \centering
  \caption{Joint BT+LDB layer ablation: $\Delta$\% vs. GAME-LoRA. Best in \textbf{bold}.}
  \label{tab:joint_ablation}
  \small
  \begin{tabular}{llcccccc}
    \toprule
    & & GAME-LoRA & Joint-2 & Joint-4 & Joint-Stride & Joint-Fibonacci & Joint-Fib-Offset \\
    \midrule
    & \textit{BT layers} & [19] & [18,19] & [16,17,18,19] & [10,13,16,19,22] & [3,8,13,16,18,19] & [2,7,12,15,17,18,19] \\
    & \textit{LDB layers} & [19] & [18,19] & [16,17,18,19] & [10,13,16,19,22] & [3,8,13,16,18,19] & [3,8,13,16,18,19] \\
    \midrule
    \multicolumn{8}{l}{\textit{Hallucination}} \\
    & HE-Dial & 0.0\% & \textbf{+5.1\%} & -4.3\% & +3.1\% & -0.1\% & +3.6\% \\
    & HE-QA & \textbf{0.0\%} & -0.3\% & -9.5\% & -0.3\% & -1.1\% & -11.8\% \\
    & HE-Summ & 0.0\% & +0.6\% & -17.2\% & -2.3\% & \textbf{+1.0\%} & -3.6\% \\
    & MemoTrap & \textbf{0.0\%} & -2.4\% & -0.4\% & -0.1\% & -1.6\% & -2.4\% \\
    & TFQA-MC1 & \textbf{0.0\%} & -3.5\% & -0.6\% & -0.6\% & -3.5\% & -4.9\% \\
    & TFQA-MC2 & 0.0\% & -1.1\% & -0.2\% & +0.9\% & \textbf{+3.0\%} & -1.1\% \\
    \midrule
    \multicolumn{8}{l}{\textit{Knowledge}} \\
    & NQ & \textbf{0.0\%} & -10.0\% & -10.0\% & -18.5\% & -26.9\% & -30.3\% \\
    & PopQA & \textbf{0.0\%} & -5.4\% & -5.4\% & -4.5\% & -3.5\% & -9.3\% \\
    & WikiText & \textbf{0.0\%} & -0.6\% & -0.6\% & -1.6\% & -2.1\% & -1.3\% \\
    & Winogrande & 0.0\% & +1.4\% & +0.2\% & \textbf{+1.9\%} & -3.3\% & -0.9\% \\
    \midrule
    \multicolumn{8}{l}{\textit{Average}} \\
    & Hallucination & 0.0\% & -0.3\% & -5.4\% & \textbf{+0.1\%} & -0.4\% & -3.4\% \\
    & Knowledge & \textbf{0.0\%} & -3.7\% & -4.0\% & -5.7\% & -8.9\% & -10.4\% \\
    \bottomrule
  \end{tabular}
\end{table*}

\paragraph{Functional separation.}
BT and LDB exhibit distinct layer-sensitivity profiles. BT layer choice strongly affects hallucination
metrics: BT-4 achieves +28.6\% on HE-QA relative to GAME-LoRA, while LDB variations stay within $\pm$8\%.
Conversely, LDB changes more strongly impact knowledge: LDB-Stride degrades NQ by $-$20.2\%, while BT-Stride
only degrades NQ by $-$6.7\%. This asymmetry confirms functional separability: BT primarily addresses
competition externalities (hallucination) while LDB addresses coordination externalities (knowledge retention).

\paragraph{Task-specific depth profiles.}
Contiguous late layers [16--19] (BT-4) maximize factual consistency (+28.6\% HE-QA, +10.2\% MemoTrap),
while sparse cross-depth coverage [10,13,16,19,22] (BT-Stride) maximizes dialogue coherence (+9.7\% HE-Dial).
No single multi-layer configuration dominates across tasks, and all degrade knowledge relative to GAME-LoRA.

\paragraph{Layer alignment.}
Joint BT+LDB at aligned layers outperforms offset configurations. Joint-Fib-Offset (misaligned BT/LDB layers)
produces the worst knowledge degradation ($-$10.4\% avg) and no task improvements despite similar layer counts
to Joint-Fibonacci. Aligned configurations like Joint-Stride achieve marginal hallucination gains (+0.1\% avg).

\paragraph{Pareto optimality.}
Single-layer regularization (GAME-LoRA) achieves the best knowledge preservation across all configurations,
with no multi-layer variant matching its NQ or PopQA scores. Multi-layer BT can boost individual benchmarks
(HE-QA +28.6\%) but degrades knowledge ($-$8.5\% avg) and increases overhead.

%%%%%%%%%%%%%%%%%%%%%%%%%%%%%%%%%%%%%%%%%%%%%%%%%%%%%%%%%%%%%%%%%%%%%%%%%%%%%%%
% LORA LAYER COVERAGE SENSITIVITY
%%%%%%%%%%%%%%%%%%%%%%%%%%%%%%%%%%%%%%%%%%%%%%%%%%%%%%%%%%%%%%%%%%%%%%%%%%%%%%%

\subsection{LoRA Layer Coverage Sensitivity}
\label{app:lora_sensitivity}

\autoref{tab:lora_ablation} compares LoRA coverage strategies relative to GAME-LoRA (all layers 0--23).

\begin{table*}[h]
  \centering
  \caption{LoRA layer coverage ablation: $\Delta$\% vs. GAME-LoRA. Best in \textbf{bold}.}
  \label{tab:lora_ablation}
  \small
  \begin{tabular}{llccc}
    \toprule
    & & GAME-LoRA & Skip 0--5 & Skip 6--10 \\
    \midrule
    & \textit{LoRA layers} & 0--23 & 6--23 & 0--5,11--23 \\
    \midrule
    \multicolumn{5}{l}{\textit{Hallucination}} \\
    & HE-Dial & 0.0\% & -3.6\% & \textbf{+1.2\%} \\
    & HE-QA & \textbf{0.0\%} & -15.0\% & -1.6\% \\
    & HE-Summ & \textbf{0.0\%} & -8.3\% & \textbf{-0.0\%} \\
    & MemoTrap & 0.0\% & -0.4\% & \textbf{+0.4\%} \\
    & TFQA-MC1 & 0.0\% & \textbf{+0.4\%} & -2.0\% \\
    & TFQA-MC2 & 0.0\% & \textbf{+1.0\%} & -1.2\% \\
    \midrule
    \multicolumn{5}{l}{\textit{Knowledge}} \\
    & NQ & 0.0\% & \textbf{+5.1\%} & +0.1\% \\
    & PopQA & 0.0\% & -0.6\% & \textbf{+0.4\%} \\
    & WikiText & \textbf{0.0\%} & -0.3\% & \textbf{-0.0\%} \\
    & Winogrande & \textbf{0.0\%} & -0.2\% & \textbf{+0.0\%} \\
    \midrule
    \multicolumn{5}{l}{\textit{Average}} \\
    & Hallucination & \textbf{0.0\%} & -4.3\% & -0.5\% \\
    & Knowledge & 0.0\% & \textbf{+1.0\%} & +0.1\% \\
    \bottomrule
  \end{tabular}
\end{table*}

Skipping early layers (0--5) improves knowledge (+1.0\% avg, +5.1\% NQ) but degrades hallucination
($-$4.3\% avg). Skipping middle layers (6--10) slightly improves HE-Dial (+1.2\%) and MemoTrap (+0.4\%)
while preserving knowledge (+0.1\% avg). Full coverage (GAME-LoRA) achieves the best hallucination
average, confirming that uniform LoRA application provides the most balanced performance.

%%%%%%%%%%%%%%%%%%%%%%%%%%%%%%%%%%%%%%%%%%%%%%%%%%%%%%%%%%%%%%%%%%%%%%%%%%%%%%%
% ALPHA SCALING SENSITIVITY
%%%%%%%%%%%%%%%%%%%%%%%%%%%%%%%%%%%%%%%%%%%%%%%%%%%%%%%%%%%%%%%%%%%%%%%%%%%%%%%

\subsection{LoRA Alpha Scaling Sensitivity}
\label{app:alpha_sensitivity}

We compare uniform $\alpha=32$ (GAME-LoRA) against a triangular pattern peaking at layer 19 (range 7--57, same average).

\begin{table*}[h]
  \centering
  \caption{LoRA $\alpha$ scaling ablation ($\alpha=32$ avg): $\Delta$\% vs. GAME-LoRA. Best in \textbf{bold}.}
  \label{tab:alpha_ablation}
  \small
  \begin{tabular}{llcc}
    \toprule
    & & GAME-LoRA & Triangular \\
    \midrule
    & \textit{$\alpha$ pattern} & uniform & peak@L19 \\
    \midrule
    \multicolumn{4}{l}{\textit{Hallucination}} \\
    & HE-Dial & 0.0\% & \textbf{+1.4\%} \\
    & HE-QA & \textbf{0.0\%} & -3.2\% \\
    & HE-Summ & 0.0\% & \textbf{+0.2\%} \\
    & MemoTrap & \textbf{0.0\%} & -1.8\% \\
    & TFQA-MC1 & 0.0\% & \textbf{+3.8\%} \\
    & TFQA-MC2 & \textbf{0.0\%} & -0.3\% \\
    \midrule
    \multicolumn{4}{l}{\textit{Knowledge}} \\
    & NQ & \textbf{0.0\%} & -5.0\% \\
    & PopQA & 0.0\% & \textbf{+4.2\%} \\
    & WikiText & \textbf{0.0\%} & -0.2\% \\
    & Winogrande & 0.0\% & \textbf{+0.7\%} \\
    \midrule
    \multicolumn{4}{l}{\textit{Average}} \\
    & Hallucination & \textbf{0.0\%} & \textbf{+0.0\%} \\
    & Knowledge & \textbf{0.0\%} & -0.1\% \\
    \bottomrule
  \end{tabular}
\end{table*}

Triangular scaling improves truthfulness (+3.8\% TFQA-MC1, +4.2\% PopQA) and commonsense (+0.7\% Winogrande)
while uniform scaling preserves factual consistency (HE-QA, MemoTrap) and knowledge (NQ). Both achieve
identical hallucination averages; uniform $\alpha$ provides slightly better knowledge preservation (0\% vs $-$0.1\%).

\end{document}